\title{Local Intrinsic Dimensional Entropy}
\author{
    Rohan Ghosh\textsuperscript{\rm 1}, 
    Mehul Motani\textsuperscript{\rm 1,2}
}
\newtheorem{theorem}{Theorem}
\newtheorem{corollary}{Corollary}[theorem]
\newtheorem{prop}{Proposition}
\newtheorem{remark}{Remark}
\newtheorem{definition}{Definition}
\DeclareMathOperator{\R}{\mathbb{R}}
\DeclareMathOperator*{\E}{\mathbb{E}}
\newcommand{\hcancel}[5]{%
    \tikz[baseline=(tocancel.base)]{
        \node[inner sep=0pt,outer sep=0pt] (tocancel) {#1};
        \draw[black] ($(tocancel.south west)+(#2,#3)$) -- ($(tocancel.north east)+(#4,#5)$);
    }%
}%
\newcommand{\Hbar}{\hcancel{\textit{h}}{-1pt}{4pt}{-1pt}{-0.5pt}}
\newcommand{\Ibar}{\hcancel{\textit{I}}{-1pt}{3pt}{2pt}{-0.5pt}}
\DeclareMathOperator{\arrow}{\xrightarrow[]{}}
\def\fr{\ensuremath{\leadsto}}
\def\notfr{\centernot\fr}
\def\frf{\xleftrightarrow{}}
\begin{document}

\maketitle

\begin{abstract}
Most entropy measures depend on the spread of the probability distribution over the sample space $\mathcal{X}$, and the maximum entropy achievable scales proportionately with the sample space cardinality $|\mathcal{X}|$. For a finite $|\mathcal{X}|$, this yields robust entropy measures which satisfy many important properties, such as invariance to bijections, while the same is not true for continuous spaces (where $|\mathcal{X}|=\infty$).
Furthermore, since $\R$ and $\R^d$ ($d\in \mathbb{Z}^+$) have the same cardinality (from Cantor's correspondence argument), cardinality-dependent entropy measures cannot encode the data dimensionality.
In this work, we question the role of cardinality and distribution spread in defining entropy measures for continuous spaces, which can undergo multiple rounds of transformations and distortions, e.g., in neural networks. We find that the average value of the local intrinsic dimension of a distribution, denoted as ID-Entropy, can serve as a robust entropy measure for continuous spaces, while capturing the data dimensionality. We find that ID-Entropy satisfies many desirable properties and can be extended to conditional entropy, joint entropy and mutual-information variants. ID-Entropy also yields new information bottleneck principles and also links to causality. In the context of deep learning, for feedforward architectures, we show, theoretically and empirically, that the ID-Entropy of a hidden layer directly controls the generalization gap for both classifiers and auto-encoders, when the target function is Lipschitz continuous. Our work primarily shows that, for continuous spaces, taking a structural rather than a statistical approach yields entropy measures which preserve intrinsic data dimensionality, while being relevant for studying various architectures.
\end{abstract}

\section{Introduction and Motivation} \label{sec:intro}
In this paper, we consider an alternative interpretation of what outlines a good measure of information, particularly in the context of sample spaces which can repeatedly undergo continuous transformations of any kind. This applies to domains where data can undergo multiple rounds of processing and distortion, for example, in machine learning with deep neural networks. Our objective here is to find whether a robust information measure can be defined in this context, which ideally satisfies the various properties exhibited by discrete entropy, which is defined for discrete spaces.  

Entropy for discrete sample spaces $\mathcal{X}$ ($|\mathcal{X}|<\infty$) is essentially a fundamental property of the space. If one were to map the elements of a finite $\mathcal{X}$ to another space $\mathcal{X}'$ using a bijection, $\mathcal{X}'$ will have the same entropy as $\mathcal{X}$. Therefore, bijective transformations such as shift and scaling will yield the same discrete entropy. Furthermore, the notion of \textit{surprise}, which is often associated with entropy, aptly applies to the case of finite $\mathcal{X}$, as a larger spread of the distribution essentially points to less certainty about the outcome of any random draw from $\mathcal{X}$. It is also worth noting that any deterministic function, which maps a finite $\mathcal{X}$ to another finite space $\mathcal{X}'$, can only reduce discrete entropy, as deterministic maps should not increase the amount of information. 

It is interesting to note then, that the counterpart of entropy measures for continuous spaces, such as differential entropy and other variants \cite{60ren,shannon_orig}, don't satisfy the same desirable properties as discrete entropy. First, we note that differential entropy can change significantly in response to trivial bijective maps, such as simply scaling the axes by a factor of $\alpha$. In fact, for a continuous random variable (RV) $X\in \R^d$, we have that $h(\alpha X)=h(X) + \log |\alpha|$, where $h(.)$ denotes the differential entropy. Second, we reason that the notion of distribution spread cannot be used as a measure of surprise for continuous distributions. This is because if one were to simply scale the axes by $\alpha$, we can directly tune the spread of the distribution. By doing so, one would be able to yield arbitrarily higher measures of surprise by simply processing the RV $X$ with a bijective map such as scaling. Note that we cannot do this for discrete entropy, as bijections such as scaling and translation do not alter the discrete entropy of the space. Third, similar to our previous arguments, we see that simply scaling the variable to have a much higher spread can increase differential entropy arbitrarily, and thus processing can increase differential entropy, contrary to how discrete entropy behaves. Note that similar conclusions would follow even if one were to estimate the differential entropy after normalizing the variance of $X$, by replacing the scaling transformation with any other scale-preserving distortion. 

\noindent\textbf{Towards Local Measures of Surprise:} We next consider how to quantify surprise for continuous spaces. We start by noting that maximum entropy for a discrete set $\mathcal{X}$ is $\log |\mathcal{X}|$, and thus depends on the cardinality of the set. In contrast, for continuous spaces of $d$ dimensions $\R^d$, the cardinality is actually invariant to the dimension $d$. As observed first by Cantor in his 1874 work \cite{cantor_1874}, there exists a bijective transformation between $\R^{d_1}$ and $\R^{d_2}$ for any $d_1,d_2\in \mathbb{Z}^+$, indicating that $\R^{d_1}$ and $\R^{d_2}$ have the same cardinality.  These observations point to a potentially different interpretation of what an entropy measure could signify in the context of continuous spaces. We see that both cardinality and spread cannot be used as a basis for computing any measure of \textit{surprise}. Furthermore, we observe that the notion of \textit{surprise} itself cannot be immediately defined in this case, as the true surprise $\lim_{dx\xrightarrow[]{} 0}log\frac{1}{P(x)dx}$, for any element $x\in \mathcal{X}$ would be infinite, as $dx\xrightarrow[]{} 0$ implies $P(x)dx\xrightarrow[]{} 0$ as well (except for Dirac-delta distributions). 

This leads us to question whether there are alternate ways to quantify surprise for continuous spaces. To do so, we define the \textit{covering} of a distribution $P(X)$, as the minimal number of spheres $B_0,B_1,...,B_K$ of fixed radius $\epsilon$, which together cover the entire support of $P(X)$. By doing so, we can effectively treat $P(X)$ as a discrete distribution of cardinality $K$, and we can define surprise in the conventional \textit{log-probability} sense. Next, for any point $X_0$ which lies in the support of $P(X)$, we define a \textit{local-neighborhood} $\mathcal{N}(X_0)$ centered at $X_0$ , which is a sphere of radius $\delta$, s.t. $\delta=\tau\epsilon$ for some constant $\tau>1$. Let the number of spheres among  $B_0,B_1,...,B_K$ that lie inside $\mathcal{N}(X_0)$, be denoted by $n$. As $\epsilon \arrow 0$, we can easily show that $n\arrow (\delta/\epsilon)^{d(X_0)}$, where $d(X_0)$ is the dimension of the distribution manifold around $X_0$, also called the local intrinsic dimension \cite{local_id_survey}. As such, the \textit{local surprise} converges to $\log n = \log (\delta/\epsilon)^{d(X_0)} = d(X_0) \log (\delta/\epsilon)=d(X_0) \log (\tau)$. Furthermore, the expected value of local surprise is $\mathbb{E}_{X\sim P}\left[d(X)\right] \log (\tau)$. 
The quantity $\mathbb{E}_{X\sim P}\left[d(X)\right]$ is what we propose to analyze in this work. We term this quantity as the \textit{local intrinsic dimensional entropy} (ID-Entropy) of the distribution, and find that it satisfies all of the earlier discussed properties of discrete entropy. Furthermore, we also find that ID-entropy can be relevant in the context of analyzing the information in deep neural network layers, and predicting their generalization behaviour, which is discussed next.

\noindent\textbf{Information Bottlenecks and Generalization:} The \textit{Information Bottleneck} (IB) principle, originally proposed in \cite{tishby_bottle}, discusses the relevance of quantities such as the mutual information (MI) between the input $X$ and features $T$, denoted by $I(X;T)$, and between the features and labels $Y$, denoted by $I(T;Y)$ to generalization performance. Broadly, the IB principle hypothesizes that neural networks undergo two phases during learning: (a) a \textit{fitting} phase where both $I(X;T)$ and $I(Y;T)$ increases, and (b) a \textit{compression} phase where $I(X;T)$ decreases. However, the true MI $I(X;T)$ is actually infinity, as given a fixed network configuration, $T$ is a deterministic function of $X$, and $I(X;f(X))=\infty$ for deterministic $f$ \cite{mi_in}. Furthermore, these two phases have not always been observed in networks \cite{better_ib}. In this work, we show that ID-Entropy of the feature layer $T$ of a neural network, denoted by $\Hbar(T)$, can be analyzed similarly, leading to analogous bottleneck criteria, and new generalization error bounds. Furthermore, unlike the vacuous nature of the estimate of $I(X;T)$, $\Hbar(T)$ is non-zero and finite, and has \textit{opposite trends} during training compared to its IB counterpart. From ID-Entropy's perspective, we find that the network undergoes compression first, as $\Hbar(T)$ reduces initially and increases slowly thereafter. Furthermore, both in theory and practice, we find that if the network converges to a higher $\Hbar(T)$ after training, it shows worse generalization performance.




\section{Contributions}
The following are the specific contributions of our work. 
\begin{enumerate}[nosep,leftmargin=*,wide=0pt]
    \item \textbf{Definition and Properties:} A novel measure of entropy for continuous domains called \textit{ID-Entropy} is proposed. Unlike differential entropy and other counterparts, \textit{ID-Entropy} is motivated from the observation that the intrinsic dimensionality of the data relates to a local measure of information gain, yielding a new notion of surprise for continuous spaces. We find that \textit{ID-Entropy} satisfies many properties of discrete Shannon entropy itself.
    \item \textbf{New Information Bottleneck Criteria:} \textit{ID-Entropy} measures leads to a new notion of information bottlenecks for auto-encoder and classifier architectures. Unlike conventional bottlenecks where the notion of MI and differential entropy can be vacuous, \textit{ID-Entropy} avoids this problem, while yielding new perspectives on the problem.
    \item \textbf{Connection to Generalization:} We find that the \textit{ID-Entropy} of input and hidden layers for feedforward architectures in large dataset scenarios controls the generalization gap for both auto-encoders and classifier architectures.
    \item \textbf{Connection to Causality:} \textit{ID-Entropy} encodes the underlying causal factors responsible for generating any given dataset, when considering a class of causal models where causal effects are generated by continuous functions. This pertains to the cases where, if $X$ is the only cause of $Y$, $Y$ is a continuous function of $X$ (but not necessarily vice-versa). 
    \item \textbf{Experimental Validation:} We empirically compute the \textit{ID-Entropy} of feature layers for feed-forward architectures such as Convolutional Neural Networks (CNNs) and Convolutional Auto-Encoders trained on MNIST and CIFAR-10. For both auto-encoders and classifiers, we find that the estimated ID-Entropy correlates with the generalization gap.  
\end{enumerate}
\section{\textit{ID-Entropy}: A New Perspective on Entropy}
\subsection{Assumptions and Definitions} \label{sec:definitions}
In this section, we present some useful definitions and assumptions, building up to \textit{ID-Entropy}.

Let $X$ be a random variable (RV) and $P(X)$ be the distribution of $X$. We first enumerate some useful notions.
\textbf{(A1)} For a RV $X$, the support of $P(X)$ can be expressed as a union of a finite number of \textit{locally Euclidean} \cite{lc_eu} manifolds, where locally Euclidean means that every point in the support of $P(X)$ has a local neighborhood which is homeomorphic to $\R^k$ for some non-negative integer $k$. 
\textbf{(A2)} For a set of RVs $\mathbb{X}=\{X_1,X_2,..,X_n\}$ and for any $S\subseteq\mathbb{X}$, the geodesic distance between any two connected points in $P(S)$ is finite. 
\textbf{(A3)} For a RV $X$, the number of connected components in the support of $P(X)$ is countable. 
Unless otherwise mentioned, all RVs in this work are assumed to satisfy A1, A2, and A3.
These assumptions are not very stringent, and we expect them to be satisfied for most cases. We provide a more detailed discussion regarding the same in the Appendix, available in \cite{arxiv_id_entropy}.  




Next, we define a novel relation between two RVs called \textit{f-relatedness}. Note that f-relatedness is used in the context of vector fields with a different meaning \cite{vec_frel}.

\begin{definition}
(\textbf{f-relatedness}) We say that a RV $Y$ is f-related to another RV $X$, if there exists a continuous function $f$ such that $f(X)=Y$.
This is denoted by $X\fr Y$.
We also refer to this as an f-relation. When $Y$ is not f-related to $X$, we denote this as $X\notfr Y$.
\end{definition}
Note that two RVs $X$ and $Y$ are either f-related or they are not. Furthermore, f-relatedness is asymmetric, i.e., $X\fr Y$ does not imply $Y\fr X$. Next, we define \textit{symmetric f-relations}.

\begin{definition}
(\textbf{Symmetric f-relation}) We say that RV's $X$ and $Y$ have a symmetric f-relation when  $X\fr Y$ and $Y\fr X$. A symmetric f-relation is denoted as $X \frf Y$. 
\end{definition}
Note that both f-relations and symmetric f-relations can be extended to the multiple RVs scenario. One can similarly define $(X_1,X_2,..,X_k)\fr Y$ and  $X\fr (Y_1,Y_2,..,Y_k)$ for RVs $X_1,X_2,..,X_k$ and $Y_1,Y_2,..Y_k$. We note that $(X_1,X_2,..,X_k)\fr Y$ does not imply $X_i\fr Y$ for any $i$. 

Next, we define the \textit{$\epsilon$-Neighborhood Intrinsic Dimension} of a RV $X$, at any point $\rho\in \R^d$, as follows. First, for some $0\leq\epsilon\leq\infty$, let us consider another RV $X_{\epsilon}^{\rho}$, which follows a distribution $P_{\epsilon}^{\rho}$ given by:
\[
    P_{\epsilon}^{\rho}(X)= 
\begin{cases}
    \frac{P(X)}{c},& \text{if } \lVert X-\rho \rVert \leq \epsilon\\
    0,              & \text{otherwise}
\end{cases}
\]
Here, $c$ is a constant which scales the distribution to ensure $\int P_{\epsilon}^{\rho}(X) dX=1$. 


Given this, we define the \textit{$\epsilon$-Neighborhood Intrinsic Dimension}  $d_{\epsilon}(\rho)$ as follows:
\begin{equation} \label{eq:master}
    d_{\epsilon}(\rho) = \min n \ \mbox{ s.t. } \ (v_1,v_2,..v_n) \frf X_{\epsilon}^{\rho}
\end{equation}
where $v_1,v_2,..v_N$ represent either continuous RVs such that $ 0 \leq v_i \leq 1$, or binary RVs such that $v_i\in \{0,1\}$. 
Thus, in other words, $d_{\epsilon}(\rho)$ represents the minimum number of total continuous and/or discrete RVs of any arbitrary dense distribution $Q$, which can be associated with $X_{\epsilon}^{\rho}$ via a homeomorphism, i.e., infinitesimal changes in $v_1,v_2,..v_N$ always yields infinitesimal changes in $X_{\epsilon}^{\rho}$ and vice versa. Note that $d_{\epsilon}(\rho)$ will be finite for small $\epsilon$ due to the locally Euclidean assumption. Thus, note that cycling through all possible values of the RVs and remapping them to $\R^d$ using the appropriate continuous map should precisely yield the set of all possible values of the RV $X_{\epsilon}^{\rho}$. 



With this, we can define \textit{$\epsilon$-ID-Entropy} $ \Hbar^{\epsilon}(X)$ of a RV $X$ with an underlying distribution $P(X)$ as follows:
\vspace{-1mm}
\begin{definition}
(\textbf{$\epsilon$-\textit{ID-Entropy}}). We define the {$\epsilon$-ID-Entropy} $\Hbar^{\epsilon}(X)$ of a RV $X$ as $\Hbar^{\epsilon}(X) = \E_{\rho\sim P(X)}\left [d_{\epsilon}(\rho)\right]$, where $P(X)$ is the underlying distribution of $X$.
\end{definition}

Note that the above formulation represents the \textit{ID-Entropy} for a finite fixed $\epsilon>0$, and, as such, one can similarly consider all possible values of $0< \epsilon < \infty$. Next, with this, we can define the \textit{ID-Entropy} $ \Hbar(X)$ of $X$ as follows.
\begin{definition}
(\textbf{ID-Entropy}). We define the \textit{ID-Entropy} $\Hbar(X)$ of a RV $X$ as
\begin{equation}
    \Hbar(X) =  \lim_{\epsilon\arrow 0^{+}}\Hbar^{\epsilon}(X).
\end{equation}
\end{definition}

Note that the above definition can be extended to the case of multiple RVs, just by considering their concatenated form as the new RV. Thus, the joint ID-Entropy of RV $X\in \R^d$ and $Y\in \R^l$ can be defined as follows:
\begin{equation}
    \Hbar(X,Y) = \Hbar(V),
\end{equation}
where $V\in \R^{d+l}$ is simply a concatenation of the two RVs $X$ and $Y$. 
Note that one can similarly define {\em mutual ID-information} and {\em conditional ID-Entropy} from the definition of \textit{ID-Entropy}, in the same way as their traditional counterparts.  They are defined as follows. 
\begin{definition}
    (\textbf{Mutual ID-information}) Given RVs $X$ and $Y$, 
$\Ibar(X;Y)=\Hbar(X)+ \Hbar(Y) - \Hbar(X,Y)$. Note that $\Ibar(X;Y)\geq0$. 
\end{definition}
\begin{definition} 
(\textbf{Conditional \textit{ID-Entropy}}) Given RVs $X$ and $Y$, 
$\Hbar(Y|X)=\Hbar(X,Y)- \Hbar(X)$. Note that $\Hbar(Y|X)\geq0$, and $\Hbar(Y|X)=0$ iff $X\fr Y$.
\end{definition}

Algorithm \ref{alg:one} shows how to estimate ID-Entropy, making use of a global-ID estimator, denoted by $f_{ID}$. 


\begin{algorithm}[t]
	\caption{Estimation of ID-Entropy}
	\textbf{Input:} $S=\{X_1,..,X_m\}$ (i.i.d samples of RV $X$), a global-ID estimator $f_{ID}(S')$ of points in $S'$, $\&$ parameters $(k,n)$. \newline
    \textbf{Output:} $ID_X$ (Estimate of ID-Entropy of $X$)
	\begin{algorithmic}[1]
	    \State $ID_{sum} = 0$ \;
        \For {$j=1,\lfloor 2m/n \rfloor,\lfloor 3m/n \rfloor,..\ldots,m$}
			\State Let $S'=$ $k$-nearest neighbors of $X_j$ in $S$\;
			\State $ID_{sum} = ID_{sum} + f_{ID}(S')$\;
		\EndFor
		\State $ID_X = ID_{sum}/n$\;
	\end{algorithmic}
	\label{alg:one}
\end{algorithm}

\subsection{Properties of \textit{ID-Entropy}}

We outline the properties of the proposed \textit{ID-Entropy} measure and its extensions. The proofs of all the properties, and all subsequent theoretical results in this paper, are given in the Appendix in \cite{arxiv_id_entropy}. In what follows, $dim(X)$ represents the dimensionality of $X$'s co-domain.

\begin{enumerate}[leftmargin=0.5cm,topsep=0cm,label={\bf P\arabic*},align=left]
\item $0\leq \Hbar(X)\leq dim(X)$ (Finite and Bounded)
\item $\Hbar(X)\leq \Hbar^{\epsilon}(X)$ for any $\epsilon>0$. 
\item $\Hbar(\alpha X)=\Hbar(X)$, for any real $\alpha$ (Scale-invariance). \label{FEP1}
\item $\Hbar(X,Y)=\Hbar(Y,X)$ (Symmetric). \label{FEP2} 
\item $\Hbar(X,Y)\geq \Hbar(X)$ and $\Hbar(X,Y)\geq \Hbar(Y)$.  
\item $\Hbar(X,Y)\leq \Hbar(X) +  \Hbar(Y)$.
\item If $X\fr Y$,  $\Hbar( X,Y) = \Hbar(X)\geq \Hbar(Y)$. (Cannot increase with continuous processing)
\item  If $X \fr Y \fr Z$, then we have that $\Ibar(X;Y) \geq \Ibar(X;Z)$. (DPI for Continuous Maps).
\item If $X \rightarrow Y \rightarrow Z$ represents a general Markov Chain, then we have that $\Ibar(X;Y) \geq \Ibar(X;Z)$. (DPI for General Markov Chains).
\item If $X\frf Y$, $\Hbar(X)=\Hbar(Y)$.
\item (Sampling Invariance) Let $X\in\R^d$ be represented as $d$ RVs $(X_1,X_2,..X_d)$. Consider functions $f_1,f_2,..f_d$ all from $\R \xrightarrow{ } \R$, such that they are order preserving and continuous ($X<Y$ implies $f_i(X)<f_i(Y)$). Then we have that, $\Hbar(X)=\Hbar(X_1,X_2,..X_d)=\Hbar(f_1(X_1),f_2(X_2),..f_d(X_d))$. We denote this property of \textit{ID-Entropy} as \textit{sampling-invariance}, where $f_1,f_2,..f_n$ represent the re-sampling functions. 
    
\end{enumerate}
\begin{remark}
We note that \textit{ID-Entropy} shares many of the desirable properties of discrete Shannon entropy. This includes P4, P5, P6, P8 and P9. In the Appendix, available in \cite{arxiv_id_entropy}, we also define a tighter variant of $\epsilon$-ID-Entropy, called $\epsilon$-log-ID-Entropy, which more smoothly connects to discrete Shannon entropy.
\end{remark}
\begin{remark}
We see that \textit{ID-Entropy} is sampling-invariant (P11), thus $\Hbar(X)$, does not change when the individual 1-d RVs within $X$ undergo separate order-preserving sampling functions. This is a desirable when $X$ represents real data sampled through sensors, as this ensures that $\Hbar(X)$ does not change in response to different choices of sensors with slightly different sensitivities, as long as they are order preserving. For instance, for vision sensors, one can potentially have many possible ways by which the pixels respond to brightness changes. As long as lower brightness gets mapped to a lower response, the \textit{ID-Entropy} of the sampled signal stays the same.
\end{remark}


\begin{remark}
We refer to our initial discussion regarding the observation that $\R^d$ and $\R$ are equivalent in terms of cardinality, thus $\R^d$ and $\R$ cannot be differentiated cardinality-wise. However, as $\R$ is not homeomorphic to $\R^d$, only via their topological properties we can differentiate $\R^d$ and $\R$. If we consider distributions $P(X)$ dense in $\R^d$, then it will have an \textit{ID-Entropy} of $d$, as any dense distribution is homeomorphic to $\R^d$. Applying this to the case of $d=1$, we see that the ID-Entropies of all 1-D distributions with dense support is $1$, contrary to differential entropy, which heavily depends on the shape and statistics of the distribution. 
\end{remark}



\subsection{Connection to Differential Entropy}
We show that there exists scenarios where \textit{ID-Entropy} is related to the traditional differential entropy \cite{cov_thom}. 

\begin{definition}
(\textbf{Differential Entropy})
The differential entropy $H(X)$ of a RV $X$ with an underlying probability density function of $P(X)$ is defined as
$H(X) = \E_{X} \left[\log(P(X))\right]$.
\end{definition}

The result, shown in Proposition \ref{prop1}, finds that only when one re-imagines the notion of \textit{probability density}, the two metrics can be shown to be of similar form. 

\begin{prop}\label{prop1}
Let us consider distributions $P(X)$ such that for all points $X_0 \in \R^d$ which lie in the support of $P(X)$, and for  $\epsilon \xrightarrow[]{} 0^{+}$ the following holds. 
\begin{equation}
    \int_{\lVert X-X_0 \rVert \leq \epsilon} P(X)dX = F(\epsilon)
\end{equation}
where $F(\epsilon)$ is some function of $\epsilon$. The above imposes a constraint where the distribution is evenly distributed across the support of $P$. Next, we consider a slightly modified notion of probability density $P_{supp}(X)$, where $P_{supp}(X_0)=\lim_{vol(V)\rightarrow 0}\int_{X\in V} P(X)dX/vol(V)$, where $V$ is the $k$-dimensional hypercube around $X_0$ and $k$ is the ID at $X_0$. Thus $P_{supp}(X)$ measures how the probability is distributed w.r.t the local distribution support. Then, as $\epsilon\xrightarrow[]{} 0^{+}$,  
\begin{equation}
    \E_{X} \left[\log(P_{supp}(X))\right] = \log(\epsilon)\Hbar(X) +  \log \frac{K}{F(\epsilon)},
\end{equation}
for some fixed scalar $K$. 
\end{prop}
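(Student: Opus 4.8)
The plan is to reduce the global statement to a pointwise asymptotic relation between the mass $F(\epsilon)$ of an $\epsilon$-ball and the support-density $P_{supp}$, and then to integrate that relation against $P$, at which point the expected local intrinsic dimension surfaces exactly as $\Hbar(X)$. First I would fix a base point $X_0$ in the support with local intrinsic dimension $k \equiv d(X_0) = \lim_{\epsilon\arrow 0^+} d_\epsilon(X_0)$. By the locally-Euclidean assumption (A1), the support of $P$ near $X_0$ is a $k$-dimensional manifold, so the $k$-dimensional Hausdorff volume of its intersection with $\{\lVert X-X_0\rVert \leq \epsilon\}$ is $c_k \epsilon^{k}\bigl(1+o(1)\bigr)$ as $\epsilon\arrow 0^+$, where $c_k$ is the unit-ball normalization constant (which also reconciles the ball used for $F$ with the hypercube used in the definition of $P_{supp}$). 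Because $P_{supp}(X_0)$ is by definition the density of the probability mass with respect to this $k$-dimensional support measure, and the even-distribution hypothesis forces that density to be locally constant, I obtain the key identity $F(\epsilon) = P_{supp}(X_0)\,c_{k}\,\epsilon^{k}\bigl(1+o(1)\bigr)$.

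Next I would take logarithms and solve for $\log P_{supp}(X_0)$, giving $\log P_{supp}(X_0) = \log F(\epsilon) - d(X_0)\log\epsilon - \log c_{d(X_0)} + o(1)$. Taking $\E_{\rho\sim P}[\cdot]$ over the base point, the only point-dependent coefficient multiplying $\log\epsilon$ is $d(\rho)$, and by the definition of ID-Entropy $\E_{\rho\sim P}[d(\rho)] = \lim_{\epsilon\arrow 0^+}\E_{\rho\sim P}[d_\epsilon(\rho)] = \Hbar(X)$; the remaining point-independent pieces, namely $\log F(\epsilon)$ together with the fixed scalar $\E_{\rho\sim P}[\log c_{d(\rho)}]$, collapse into the single constant $K$ of the statement. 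Letting $\epsilon\arrow 0^+$ then produces the claimed identity $\E_X[\log(P_{supp}(X))] = \log(\epsilon)\Hbar(X) + \log\frac{K}{F(\epsilon)}$, where one simply keeps track of the sign convention adopted in the Definition of differential entropy above.

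The step I expect to be the main obstacle is making the volume asymptotic $c_k\epsilon^{k}\bigl(1+o(1)\bigr)$ rigorous and, crucially, uniform enough in $X_0$ that the $o(1)$ error survives both the logarithm and the expectation: manifold curvature and proximity to the boundaries of the countably many connected components can perturb the leading $\epsilon^{k}$ scaling, so I would lean on assumptions A2 and A3 to bound geodesic distances and control the components, and on a dominated-convergence argument to justify exchanging $\lim_{\epsilon\arrow 0^+}$ with $\E_{\rho\sim P}$. A secondary subtlety is consistency: a single profile $F(\epsilon)$ holding at every $X_0$ is only tenable when the local dimension is constant on each dimensional stratum (on which the hypothesis in fact forces $P_{supp}$ to be constant), so I would either argue stratum-by-stratum and aggregate, or read the equality as the leading-order asymptotic in $\log\epsilon$, which is all that is required for the coefficient $\Hbar(X)$ to appear.
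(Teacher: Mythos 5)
Your proposal takes essentially the same route as the paper's own proof: the volume of the support inside the $\epsilon$-ball scales as a constant times $\epsilon^{d_{\epsilon}(X_0)}$, so $P_{supp}(X_0)\approx F(\epsilon)/(K\epsilon^{d_{\epsilon}(X_0)})$, and taking logarithms and the expectation over $X_0$ makes $\Hbar(X)$ appear as the coefficient of $\log\epsilon$; your added care (the $k$-dependence of the unit-ball constant absorbed via $\E_{\rho\sim P}[\log c_{d(\rho)}]$, uniformity of the $o(1)$, dominated convergence, stratification by local dimension) only tightens the paper's two-line version of the same argument. You and the paper also share the same sign wrinkle: the literal mass-over-volume computation gives $\E_X[\log P_{supp}(X)] = -\Hbar(X)\log\epsilon - \log\frac{K}{F(\epsilon)}$, i.e.\ the negative of the stated identity, which the paper resolves by silently inserting a minus sign (consistent with its sign-flipped definition of differential entropy $H(X)=\E_X[\log P(X)]$) and you by deferring to ``the sign convention,'' so your treatment is faithful to the paper's proof, quirk included.
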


\begin{remark}
Note that there are two separate conditions for the result in Proposition \ref{prop1} to hold, (a) the equi-probability condition indirectly enforces that among all distributions which are only non-zero valued in the support of $P(X)$, $P(X)$ is of the largest differential entropy, and (b) the re-imagining of probability density with respect to the volume of the support of $P(X)$, rather than the hypercube in $\R^d$, which is usually the case. For example, if the data manifold in $\R^d$ is a 2D plane, the probability density is the total probability divided by the total surface area of the plane. 
\end{remark}

\section{Perspectives on ID-Entropy}

In this section, we provide additional results pertaining to \textit{ID-Entropy} and its variants, to shed more perspectives on the nature of information encoded by the metric.

\subsection{ID-Entropy: New Bottleneck Criteria}
We find that \textit{ID-Entropy} can be used to formulate information bottleneck criteria similar to \cite{tishby_bottle}, for both auto-encoders and classifiers. Let $T$ represent the hidden layer of a network. $\Hbar(T)$ is non-zero and finite, unlike $I(X;T)$ which is actually infinite when $T$ is a deterministic function of $X$. We outline the two bottleneck criteria in the following definitions, which are later also supported by the theoretical results in the following section.




\begin{definition}\label{prop3}
(\textbf{Bottleneck for Auto-Encoders}) Consider the class of auto-encoders which can be represented sequentially as $X \fr T \fr \tilde{X}$, where $T$ is the encoding of $X$ as some latent feature representation. When $X=\tilde{X}$, we have that $  \Hbar(X)=\Hbar(T)\leq \Hbar^{\epsilon}(T) \leq dim(T)$.
Thus, we outline the condition for a \textit{relaxed} information bottleneck criterion as follows.
\begin{equation}
    \min \Hbar^{\epsilon}(T) \ \mbox{ s.t. } \  \tilde{X}=X.
\end{equation}

\end{definition}

\begin{definition}\label{prop4}
(\textbf{Bottleneck for Classifiers}) Consider the class of feedforward supervised architectures which can be represented sequentially as $X \fr T \fr \tilde{Y}$, where $T$ represents any hidden layer within the architecture. Given this, we outline the conditions for a \textit{relaxed} information bottleneck criterion as follows.
\begin{equation}
    \min \Hbar^{\epsilon}(T) \ \mbox{ s.t. } \ \tilde{Y}=Y.
\end{equation}
\end{definition}

\begin{remark}
Note that unlike conventional bottleneck measures which work with the MI between the features and inputs, or features and labels, here, the bottleneck principle for ID-Entropy, in both scenarios, concerns with the $\epsilon$-ID-Entropy of the features. Specifically, for these bottlenecks, the objective is to minimize $\epsilon$-ID-Entropy of the features, under the condition that the network has a perfect fit on the output, for both scenarios. 
\end{remark}

\subsection{ID-Entropy: Relevance to Generalization} \label{sec:generalization}
In this section, we provide two theoretical results that connect ID-Entropy of the input and the hidden layers to generalization error, for both auto-encoders and classifiers. We first give some necessary definitions, after which we provide the bounds on generalization error for both cases.

\begin{definition}
(\textbf{Label Generating Function (LGF):}) For any given binary classification task, where the input data $X\in \R^d$ and $X\sim P$, and labels $Y\in \{-1,1\}$, we define the ground truth label generating function $g_{true}$ as the function which generates the true labels on all datapoints $X\sim P$. 
\end{definition}

\begin{definition}
(\textbf{Probabilistically Lipschitz:}) Any function $f:\R^d\xrightarrow{} \{-1,1\}$ is said to be probabilistically $C_{p}$-Lipschitz, if it satisfies the following.
For any $X,X'\in R^d$, 
\begin{equation}
    P\left(f(X)\neq f(X')\right) \leq C_{p}\lVert X-X' \rVert. 
\end{equation}
\end{definition}

\begin{theorem}  \label{thm:unsup}
\textbf{(Auto-Encoders)} We consider a scenario with an auto-encoder network.  Let $S=\{X_1,X_2,..,X_m\}$, be the training data points. Also, let $\widehat{err}_{MSE}(S)$ represent the training mean-squared reconstruction error and $err_{MSE}(g(f_T(X)),X)=\mathbb{E}_{S}[\widehat{err}_{MSE}(S)]$ be the generalization error, where $f_T(X)$ represents the function at any layer $T$, and $g(.)$ represents the reconstruction function. Let $g_{true}(.)$ be the ground truth reconstruction function. We consider architectures which yield continuous $f_T$ and $g$. Given that $X\sim P$, let $d_{max}=\max_{X\sim P,i} \lVert X-X_i \rVert$. Then, as $m\xrightarrow[]{} \infty$, for Lipschitz continuous $g$, $g_{true}$ and $f_T$, and for some $C>0$ we have that
\begin{equation}
    err_{MSE}(g(f_T(X)),X) \leq \widehat{err}_{MSE}(S) + \frac{C d_{max}^2\Hbar(T)}{\Hbar(T)+2},
\end{equation}
where $C$ depends on Lipschitz constants of $g$, $g_{true}$ and $f_T$. 
\end{theorem}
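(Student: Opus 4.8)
The plan is to decompose the per-point reconstruction error into a training-error contribution and a feature-space displacement contribution, and then to show that averaging the displacement over $P$ produces exactly the factor $\Hbar(T)/(\Hbar(T)+2)$. Throughout I work in the $m\arrow\infty$ regime, which is what guarantees that the training features $\{f_T(X_i)\}$ densely cover the feature manifold, so that every test feature has a well-placed nearest training feature and the local intrinsic dimension (and hence $\Hbar(T)$) is the relevant quantity via the covering count $n\arrow(\delta/\epsilon)^{d}$ from the motivation.

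First I would reduce everything to the feature space. Since perfect reconstruction $\tilde X=X$ is the target, the ground-truth decoder satisfies $g_{true}(f_T(X))=X$, so the test error at a point $X$ with feature $T=f_T(X)$ is $\lVert g(T)-g_{true}(T)\rVert^2$. Letting $T_i=f_T(X_i)$ be the nearest training feature to $T$, the triangle inequality together with Lipschitz continuity of $g$ and $g_{true}$ gives
\[
\lVert g(T)-g_{true}(T)\rVert \le (L_g+L_{g_{true}})\lVert T-T_i\rVert + \lVert g(T_i)-X_i\rVert ,
\]
where I used $g_{true}(T_i)=X_i$. The last term is the (square-root) training error at sample $i$; squaring and taking $\E_{X\sim P}$ yields $\widehat{err}_{MSE}(S)$ together with a term controlled by $\E_X\!\left[\lVert T-T_i\rVert^2\right]$, with the cross term handled by AM--GM and absorbed into the constant in the large-$m$ limit.

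The heart of the argument is to bound $\E_X\!\left[\lVert T-T_i\rVert^2\right]$. Here I would invoke the covering picture: near a feature point of local intrinsic dimension $d(T)$ the manifold is homeomorphic to $\R^{d(T)}$, and in the dense limit the nearest-training-feature map partitions a neighborhood into cells whose radius is at most $R\le L_{f_T}\,d_{max}$ by Lipschitz continuity of $f_T$. Modeling each cell as a $d(T)$-ball and integrating radially gives the local mean-squared displacement
\[
\frac{\int_0^{R} r^2\, r^{\,d(T)-1}\,dr}{\int_0^{R} r^{\,d(T)-1}\,dr} \;=\; \frac{d(T)}{d(T)+2}\,R^2 .
\]
Pulling $R^2\le L_{f_T}^2 d_{max}^2$ out and averaging the prefactor over $P$, the concavity of $x\mapsto x/(x+2)$ lets Jensen's inequality convert $\E_X\!\left[d(T)/(d(T)+2)\right]$ into $\E_X[d(T)]/(\E_X[d(T)]+2)=\Hbar(T)/(\Hbar(T)+2)$, which is the correct direction for an upper bound. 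Folding $(L_g+L_{g_{true}})^2 L_{f_T}^2$ and the radial constant into $C$ then delivers the claimed inequality.

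I expect the main obstacle to be making the local radial-integral step rigorous. The Voronoi cells of the training features are not exact balls, the dimension $d(T)$ is only the limiting intrinsic dimension, and the manifold is curved with $d(T)$ varying across it; one must justify that as $m\arrow\infty$ the empirical nearest-neighbor geometry converges to the clean $d(T)/(d(T)+2)$ scaling uniformly enough to integrate against $P$, and that the radial scale is genuinely dominated by $d_{max}$. Controlling this cell geometry on a curved manifold of varying local dimension is the delicate part; by comparison the Lipschitz reductions and the Jensen averaging are routine.
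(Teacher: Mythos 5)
Your proposal follows essentially the same route as the paper's proof: decompose the test error at each point into the training error at the nearest training sample plus a Lipschitz-controlled feature displacement, bound the displacement by the second moment of a uniform $d$-ball of radius $O(L_{f_T} d_{max})$, giving the factor $\frac{d}{d+2}$, and pass from $\E\!\left[\frac{d}{d+2}\right]$ to $\frac{\Hbar(T)}{\Hbar(T)+2}$ via concavity (a step the paper performs as an unannotated inequality, which is exactly Jensen as you state). Your minor variations---nearest neighbors taken in feature rather than input space, and explicitly flagging the squared-triangle-inequality cross terms that the paper silently drops---do not change the argument's structure, and your stated concern about rigorously justifying the idealized Voronoi-cell geometry in the $m\arrow\infty$ limit applies equally to the paper's own proof, which simply assumes local uniformity over each cell.
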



\begin{corollary} \label{corr:sup}
\textbf{(Classifiers)}  We are given training data points and labels $S=\{(X_1,y_1),(X_2,y_2),..,(X_m,y_m)\}$.  Also, let $\widehat{err}(S)$ represent the 0-1 loss on the training data and $err(g(f_T(X)),y)$ represent the generalization error, where $f_T(X)$ represents the function at any layer $T$, and $g(f_T(X))$ yields the network output.  Let $g_{true}(.)$ be the ground truth label generating function. We consider architectures which yield continuous $f_T$ and $g$. Then, as $m\xrightarrow[]{} \infty$, for probabilistically Lipschitz continuous $g$ and $g_{true}$ and Lipschitz continuous $f_T$, we have that
\begin{equation}
    err(g(f_T(X)),y) \leq \widehat{err}(S) + \frac{C_{p} d_{max}\Hbar(T)}{\Hbar(T)+1},
\end{equation}
where $C_{p}$ depends on the probabilistically Lipschitz constants of $g$ and $g_{true}$, and the Lipschitz constant of $f_T$. 
\end{corollary}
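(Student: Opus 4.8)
The plan is to prove the classifier bound as a direct parallel to Theorem~\ref{thm:unsup}, with the mean-squared loss replaced by the $0$--$1$ loss handled through the probabilistic-Lipschitz conditions. The only structural change is that the discrepancy between two points becomes \emph{linear} in their distance (via probabilistic Lipschitzness) rather than quadratic (via the squared reconstruction error), and this is exactly what turns the factor $d_{max}^2\,\Hbar(T)/(\Hbar(T)+2)$ of the auto-encoder case into $d_{max}\,\Hbar(T)/(\Hbar(T)+1)$.

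First I would write the generalization error as the expected disagreement $err = \mathbb{P}_{X}\big(g(f_T(X))\neq g_{true}(X)\big)$ and couple every test point $X$ to a training point $X'$ drawn from $S$. A union bound splits the disagreement event into three pieces: the learned classifier $g\circ f_T$ differs between $X'$ and $X$; the classifier is already wrong at $X'$; and the ground truth $g_{true}$ differs between $X'$ and $X$. The middle piece contributes $\widehat{err}(S)$ as $m\arrow\infty$, while the first and third are controlled by the probabilistic-Lipschitz hypotheses: $\mathbb{P}(g(f_T(X))\neq g(f_T(X')))\leq C_p^{g}\lVert f_T(X)-f_T(X')\rVert \leq C_p^{g}L_{f_T}\lVert X-X'\rVert$ using the Lipschitz constant $L_{f_T}$ of $f_T$, and $\mathbb{P}(g_{true}(X)\neq g_{true}(X'))\leq C_p^{g_{true}}\lVert X-X'\rVert$. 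Collecting the constants into a single $C_{p}$ reduces the gap to a bound of the form $C_{p}\,\mathbb{E}\big[\lVert X-X'\rVert\big]$ over the chosen coupling.

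The heart of the argument is evaluating this expected distance through the local intrinsic dimension. Around a point $\rho$ in the support, the coupling restricted to the local neighborhood is modelled as a draw from a $d_\epsilon(\rho)$-dimensional ball whose radius is controlled by $d_{max}$, for which the first distance moment is
\begin{equation}
    \mathbb{E}\big[\lVert Z\rVert\big] = \frac{d_\epsilon(\rho)}{d_\epsilon(\rho)+1}\,d_{max},
\end{equation}
the $k=1$ case of the identity $\mathbb{E}[\lVert Z\rVert^{k}] = \tfrac{d}{d+k}R^{k}$ for $Z$ uniform on a $d$-ball of radius $R$ (the $k=2$ case of this same identity is what produces the $d_{max}^2$ and the $\Hbar(T)+2$ denominator in Theorem~\ref{thm:unsup}). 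Averaging the per-point quantity $C_{p}\,\tfrac{d_\epsilon(\rho)}{d_\epsilon(\rho)+1}\,d_{max}$ over $\rho\sim P$ and letting $\epsilon\arrow 0^{+}$ leaves $C_{p}\,d_{max}\,\mathbb{E}_{\rho}\!\big[\tfrac{d(\rho)}{d(\rho)+1}\big]$, where $d(\rho)$ is the local ID of the feature representation $T$.

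Finally I would apply Jensen's inequality to the concave map $\phi(x)=x/(x+1)=1-1/(x+1)$ (which has $\phi''(x)=-2/(x+1)^3<0$), obtaining $\mathbb{E}_{\rho}[\phi(d(\rho))]\leq \phi(\mathbb{E}_{\rho}[d(\rho)]) = \phi(\Hbar(T)) = \Hbar(T)/(\Hbar(T)+1)$, since $\mathbb{E}_{\rho}[d(\rho)] = \Hbar(T)$ by the definition of ID-Entropy. Substituting yields the claimed $\widehat{err}(S) + C_{p}\,d_{max}\,\Hbar(T)/(\Hbar(T)+1)$. I expect the main obstacle to be the third step: rigorously justifying the local uniform-ball model, the appearance of $d_{max}$ as the effective neighborhood radius (rather than a vanishing nearest-neighbor scale), and the reconciliation of the feature-space dimension $\Hbar(T)$ with the input-space $d_{max}$ through $L_{f_T}$ absorbed into $C_{p}$, all while making the $m\arrow\infty$ coupling precise enough that the training-error piece converges to $\widehat{err}(S)$ but the $\tfrac{d}{d+1}$ geometry is retained. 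The non-smoothness of the $0$--$1$ loss, which is precisely why the probabilistic-Lipschitz formulation replaces ordinary Lipschitzness, is the secondary technical point to handle with care.
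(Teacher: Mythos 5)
Your proposal matches the paper's proof essentially step for step: the same three-term decomposition of the 0--1 disagreement using the probabilistic Lipschitz constants of $g$ and $g_{true}$, the same nearest-training-point (Voronoi-cell $N_i$, enlarged to the ball $S_i$ of radius $d_{max}$) coupling with the feature-space neighborhood embedded via $f_T$'s Lipschitz constant into a ball of radius $L_3 d_{max}$, and the same first-moment identity $\mathbb{E}[\lVert Z\rVert]=\frac{d}{d+1}R$ for the uniform ball, with the same caveats about the heuristic $m\arrow\infty$ local-uniformity step that the paper itself leaves informal. The only real difference is that you explicitly invoke Jensen's inequality for the concave map $x\mapsto x/(x+1)$, a step the paper applies silently when passing from the average of $d_i/(d_i+1)$ to $\Hbar(T)/(\Hbar(T)+1)$, which is a welcome clarification rather than a deviation.
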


\begin{remark}
Note that these theoretical results support the bottleneck criteria defined in the previous section. Taken together, Theorem \ref{thm:unsup} and Corollary \ref{corr:sup} both indicate that a larger ID-Entropy of the feature layers will likely lead to a larger generalization gap. Note that these results are derived for the asymptotic case of infinite training examples, but we should expect them to hold for large training datasets as well, and to a certain extent in smaller training datasets. In our experiments we indeed find that for moderately sized training datasets in MNIST and CIFAR-10, ID-Entropy of feature layers indeed grow with the generalization gap.  
\end{remark}

\begin{remark}
Note that these results are not intended to be computable bounds, but rather to provide insights into the overall trends on how ID-Entropy can potentially relate to the generalization gap, particularly the asymptotic behaviour of the gap. Note, as $m\rightarrow \infty$, we have $d_{max}\rightarrow 0$.
\end{remark}

\subsection{ID-Entropy: Connections to Causality}
The properties of \textit{ID-Entropy} indicate that it could potentially encode the underlying causal factors under a specific set of assumptions for cause-effect relationships. The following proposition shows this for the case where $k$ independent, continuous valued causes generate the data. 

\begin{prop}\label{prop2}
Consider any arbitrary causal diagram (directed acyclic graph) which has an observable RV $X\in \R^d$, and a set of hidden variables (potentially causes) in $C = \{C_1,C_2,...,C_n\}$ where $C_i\in \R \ \forall i$. Furthermore, assume that if $(X_1,X_2,...,X_p)$ are the parent nodes of $Y$, then we have that $(X_1,X_2,...,X_p)\fr Y$. Given this, we have
\begin{equation}
    \Hbar(X) \leq \min k \  \mbox{ s.t. } \  (C_{a(1)},C_{a(2)},..C_{a(k)}) \fr X, 
\end{equation}
where $1 \leq a(i)\leq n$. Thus, the \textit{ID-Entropy} of $X$ is less than the minimum number of variables in $C$ such that $X$ is f-related to them.  
\end{prop}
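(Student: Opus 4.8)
The plan is to reduce the statement to two already-established properties of ID-Entropy: monotonicity under continuous maps (P7) and the dimension bound (P1). The causal-graph hypotheses enter only to guarantee that the minimization on the right-hand side ranges over a nonempty set, i.e. that $X$ is genuinely f-related to some finite collection of the hidden variables $C_i$; once that is secured, the inequality itself is immediate.

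First I would establish the existence of a continuous generating set. Fix a topological order on the DAG and argue by induction along it that every node is an f-relation image of the root nodes among its ancestors. The base case is trivial, since a root node is the identity-image of itself. For the inductive step I use the standing assumption that every node $Y$ with parents $(X_1,\dots,X_p)$ satisfies $(X_1,\dots,X_p)\fr Y$: if each parent is already a continuous function of the relevant root causes, then, because forming a tuple of continuous functions is continuous and composition of continuous functions is continuous, $Y$ is also a continuous function of those roots. Applying this to $X$ yields $(C_{b(1)},\dots,C_{b(m)})\fr X$ for the collection of root ancestors of $X$ lying in $C$, so the set of integers $k$ for which $(C_{a(1)},\dots,C_{a(k)})\fr X$ holds is nonempty and its minimum is well defined.

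Next, let $k$ attain this minimum and set $V=(C_{a(1)},\dots,C_{a(k)})$, which is the concatenation of $k$ scalar RVs and hence $V\in\R^{k}$. Since $V\fr X$ by the choice of the $a(i)$, property P7 (ID-Entropy cannot increase under continuous processing) gives $\Hbar(X)\le\Hbar(V)$. Because the co-domain of $V$ is $\R^{k}$, so that $dim(V)=k$, the boundedness property P1 yields $\Hbar(V)\le dim(V)=k$. Chaining the two bounds gives $\Hbar(X)\le\Hbar(V)\le k$, which is exactly the asserted inequality.

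I expect the only genuine subtlety to be the existence step, namely carefully propagating f-relatedness through the DAG while respecting that parents enter as a joint tuple: one must invoke that continuity is preserved both under forming tuples and under composition, and keep in mind (per the remark following the definition of f-relatedness) that $(X_1,\dots,X_p)\fr Y$ refers to the joint parents rather than to each parent individually. The concluding two inequalities are a direct application of P7 and P1 and demand no additional computation.
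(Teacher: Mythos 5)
Your proposal is correct and matches the paper's own proof in its essential chain: concatenate the minimizing causes into $\tilde{C}\in\R^k$, apply P7 ($\tilde{C}\fr X$ implies $\Hbar(X)\leq\Hbar(\tilde{C})$), then apply P1 ($\Hbar(\tilde{C})\leq dim(\tilde{C})=k$). Your preliminary induction along the DAG to show the minimization is over a nonempty set is a reasonable extra care the paper leaves implicit (and is not strictly needed, since if no subset of $C$ were f-related to $X$ the minimum would be vacuous and the inequality trivially true), but it does not change the argument.
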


\begin{remark}
The result in proposition \ref{prop2} holds for causal models where, if $X_1,..,X_k$ are the sole causes of $Y$, then $(X_1,..,X_k)\fr Y$, i.e., $Y$ is some continuous function of $X_1,..,X_k$. In this setting, we find that the ID-Entropy of observable variables are internally related to the minimal number of causes that functionally relate to the observed variable. As ID-Entropy is invariant to the shape and statistics of the distribution (Remark 3), in the context of proposition \ref{prop2} it implies that \textit{ID-Entropy} can capture  \textit{structural} information pertaining to the hidden causes, rather than purely statistical information. 
\end{remark}

\section{Related Work}
With regard to our proposed variants of ID-entropy measure, their various entropy-like properties and the relevance to generalization, we did not find much directly related work. However, there are various bodies of work in literature which have individually studied the various aspects of our proposed measure. First, we note that as one of our objectives is to find a suitable measure of entropy that is invariant to homeomorphisms, this rules out the well-known variants of divergence measures in the literature. They include, KL-divergence, Bregman-divergence, and the f-divergences \cite{div_stat}. We note that these measures are not topological invariants, i.e., they are not invariant to the set of all homeomorphisms. Similarly, other topological properties of a distribution such as the Hausdorff dimension and the fractal dimensions are not topological invariants \cite{frac_s}. We also note that intrinsic dimensionality has been studied in previous works \cite{id_1,id_2}, not from the perspective of an entropy measure, however. Rather, the use of homeomorphisms to $\R^n$ in our work in order to arrive at the local dimensionality is more aligned with the concept of topological dimension \cite{top_dim}. However, the topological dimension is a global metric whereas our proposed measures are expected values of their local counterparts. There has been a body of work which connects dimensionality and entropy \cite{renyi_infodim,frac_ent}, however these approaches are different to our formulation of entropy, and can be characterized through the fractal dimension of the distribution, which are not topological invariants. This includes the information dimension \cite{infordim_orig}, which was found to relate to ID under a similar expression in \cite{id_infodim}, only under the finite entropy constraint, thus not being a true topological invariant. Apart from the aforementioned, there are works that propose alternative entropy definitions motivated by the structure of data \cite{datastruct_ent_1,datastruct_ent_2,datastruct_ent_3}, but are not ID-based and are not topological invariants. Note that here we define ID in a specific manner, encompassing both discrete and continuous variables \eqref{eq:master}, that extends to distributions that aren't locally Euclidean (e.g., intersecting manifolds). Lastly, we additionally discuss its relevance to a broad range of concepts such as causality, information bottlenecks and generalization error. With regard to generalization error, recent studies in \cite{id_neur,id_1,id_2,id_genquad} also note that low data ID yields a lower generalization error. However, (i) these studies mainly consider the global ID and (ii) a theoretical study of how the expected local ID asymptotically relates to generalization error for both classifier and auto-encoder architectures is lacking. Other work also has studied the relevance of ID (global) to generalization error such as in \cite{id_1}, however, the ID there is estimated for the network function, rather than the input and hidden layers. 
\begin{figure*}[t]
\begin{subfigure}[b]{.33\linewidth}
  \includegraphics[width=\linewidth]{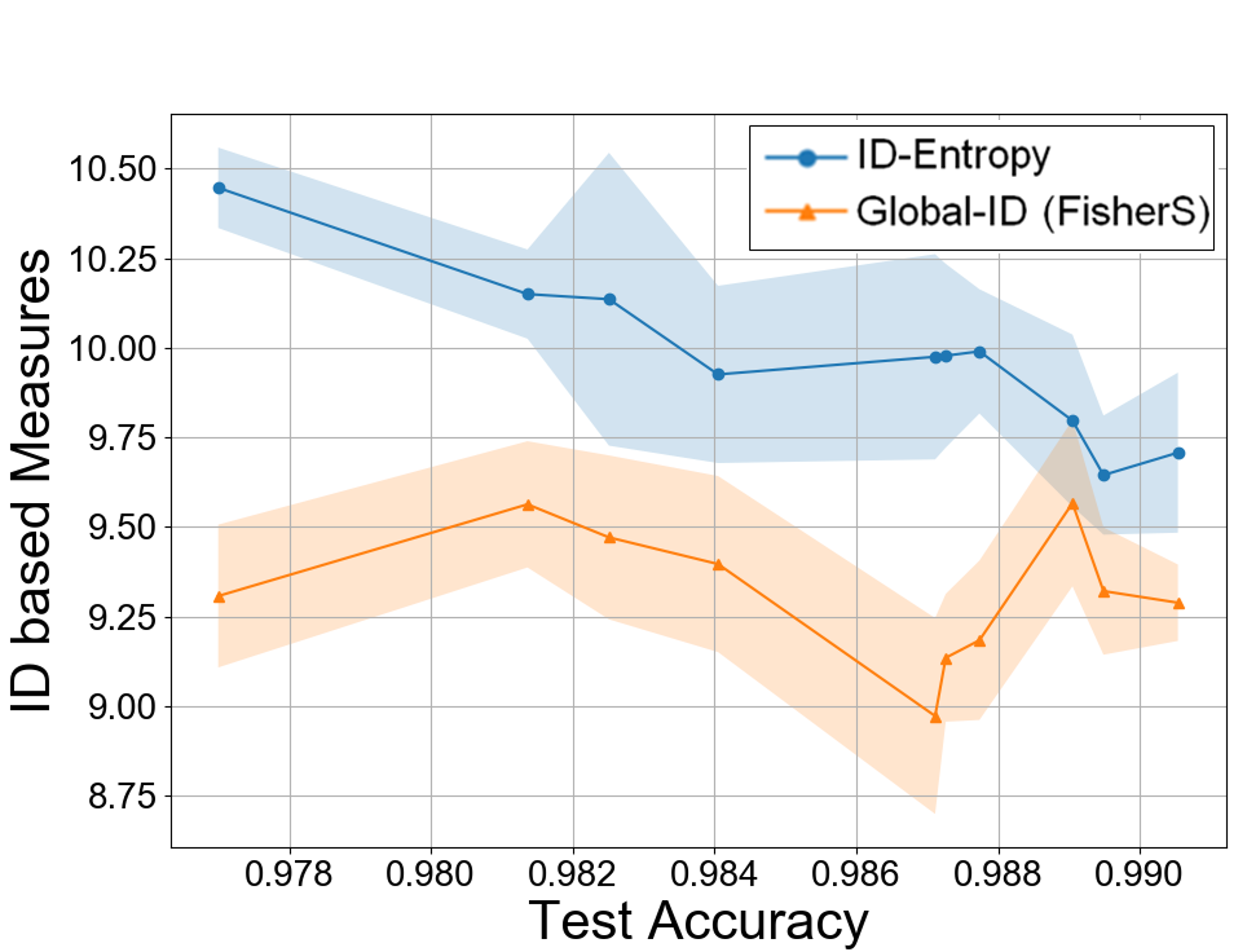}  
  \caption{MNIST, Classifiers}
\end{subfigure}
\begin{subfigure}[b]{.33\linewidth}
  \includegraphics[width=\linewidth]{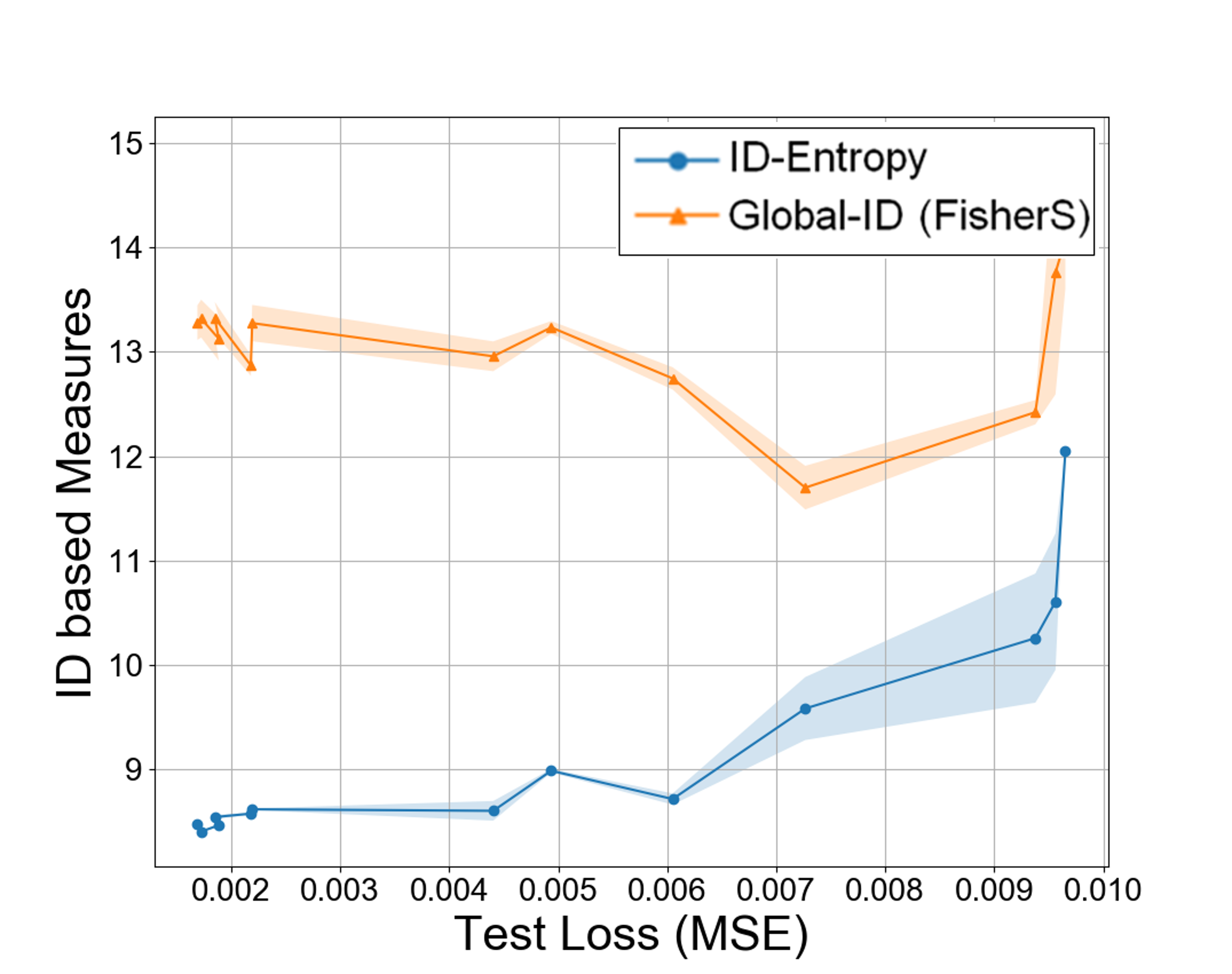}  
  \caption{MNIST, Auto-Encoders}
\end{subfigure}
\begin{subfigure}[b]{.33\linewidth}
  \includegraphics[width=\linewidth]{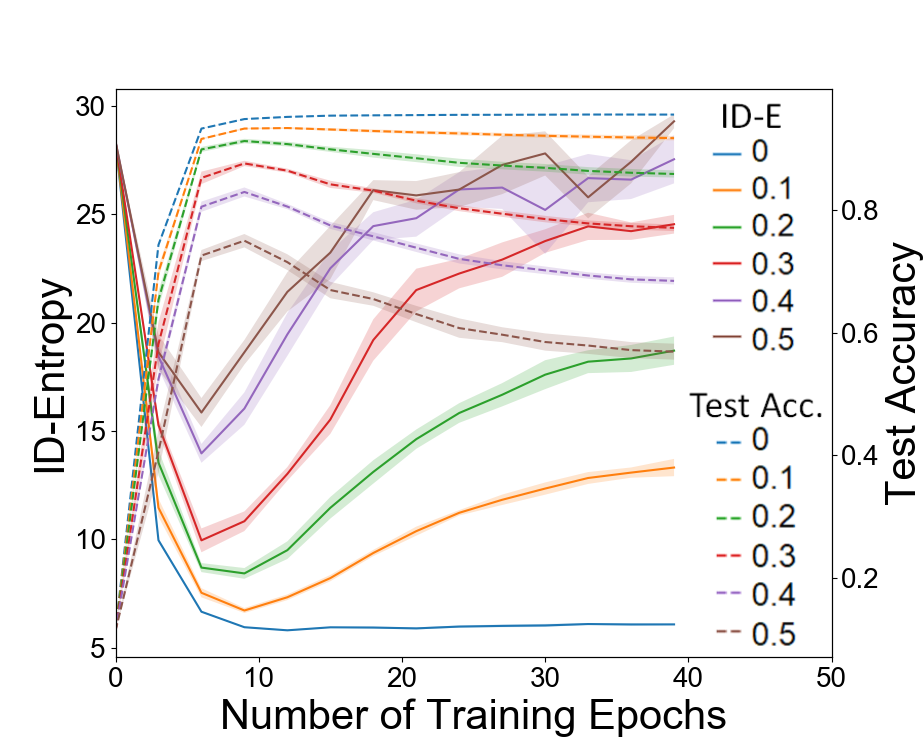}  
  \caption{MNIST, Label Noise}
\end{subfigure}
\begin{subfigure}[b]{.33\linewidth}
  \includegraphics[width=\linewidth]{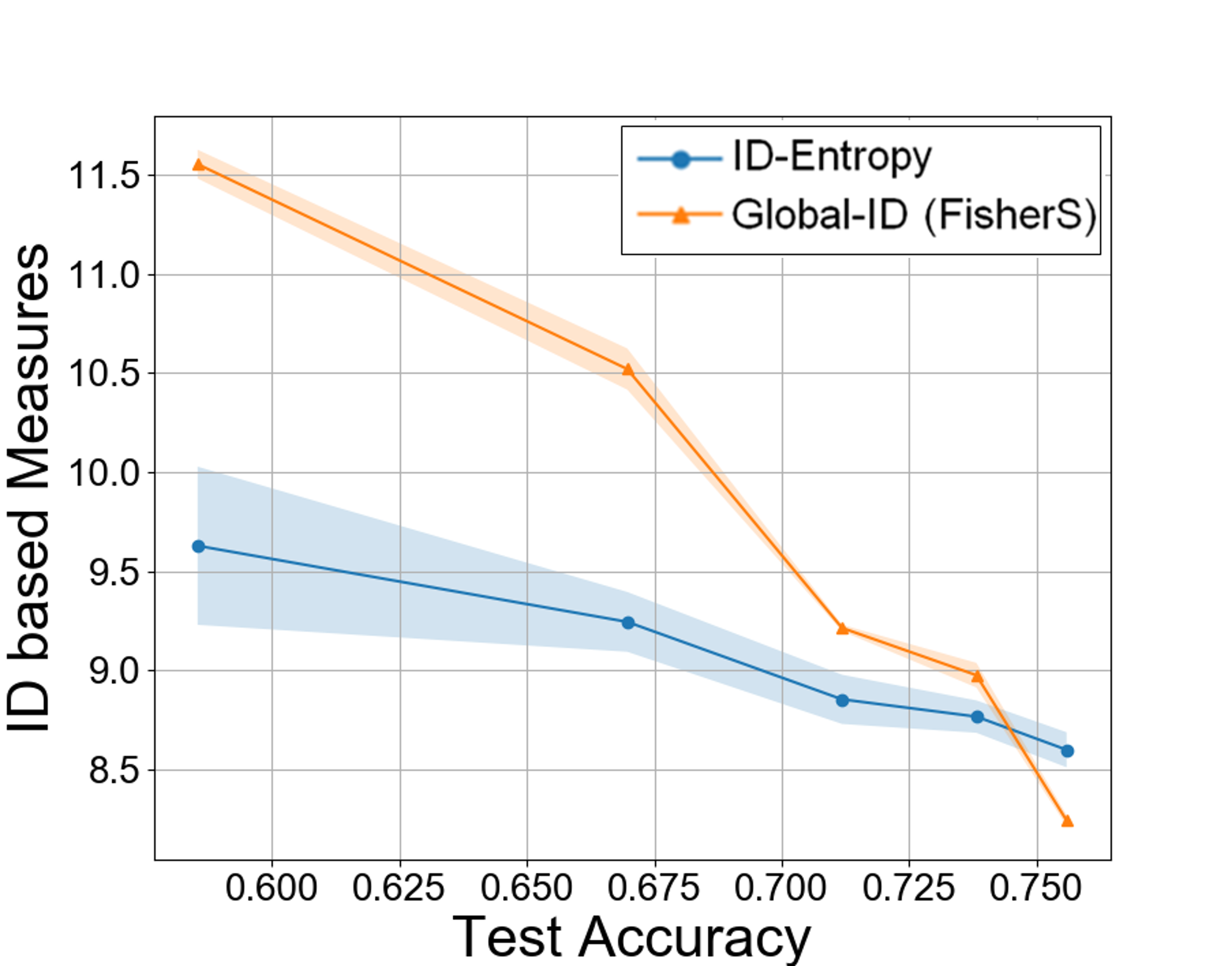}  
  \caption{CIFAR-10, Classifiers}
\end{subfigure}
\begin{subfigure}[b]{.33\linewidth}
  \includegraphics[width=\linewidth]{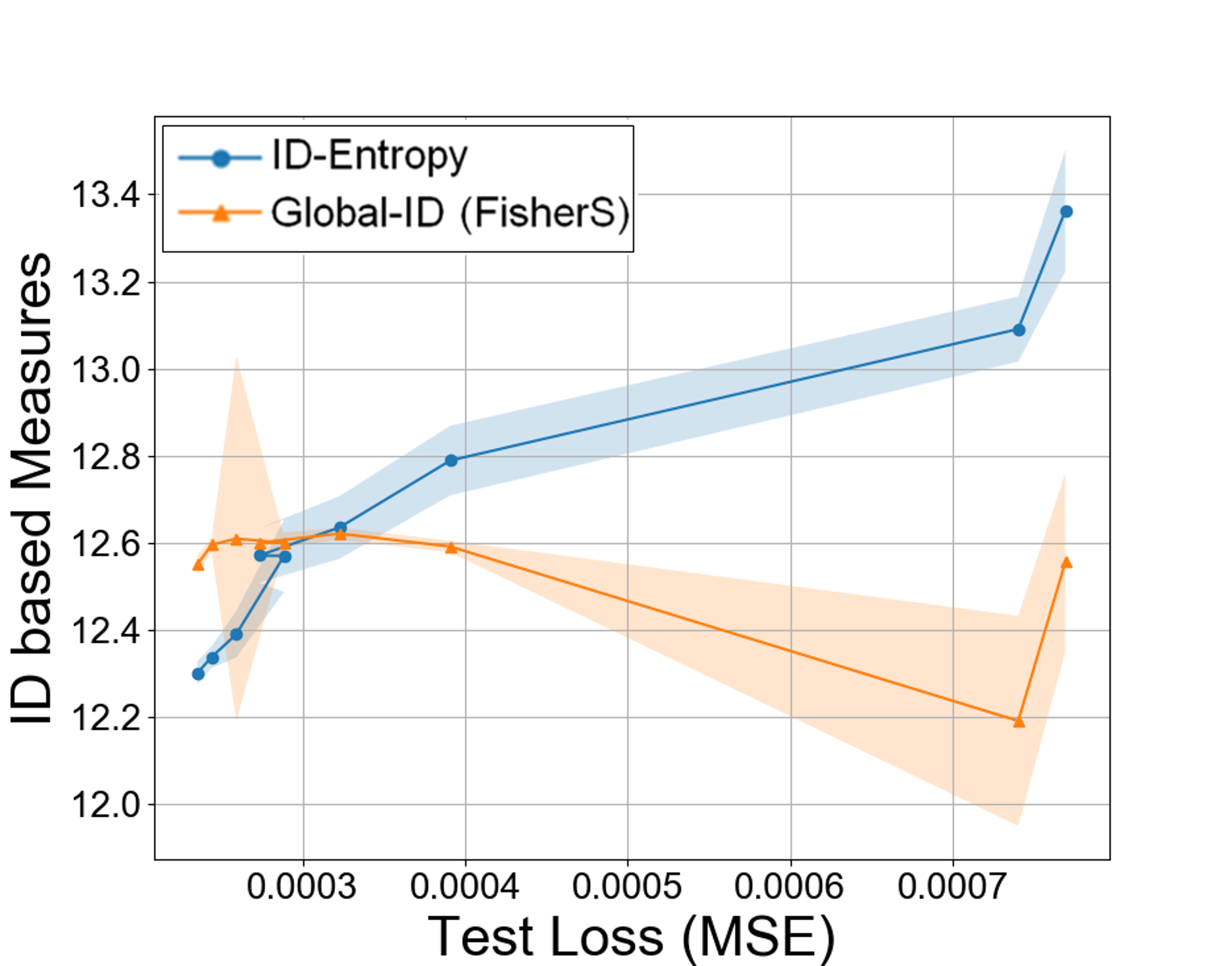}  
  \caption{CIFAR-10, Auto-Encoders}
\end{subfigure}
\begin{subfigure}[b]{.33\linewidth}
  \includegraphics[width=\linewidth]{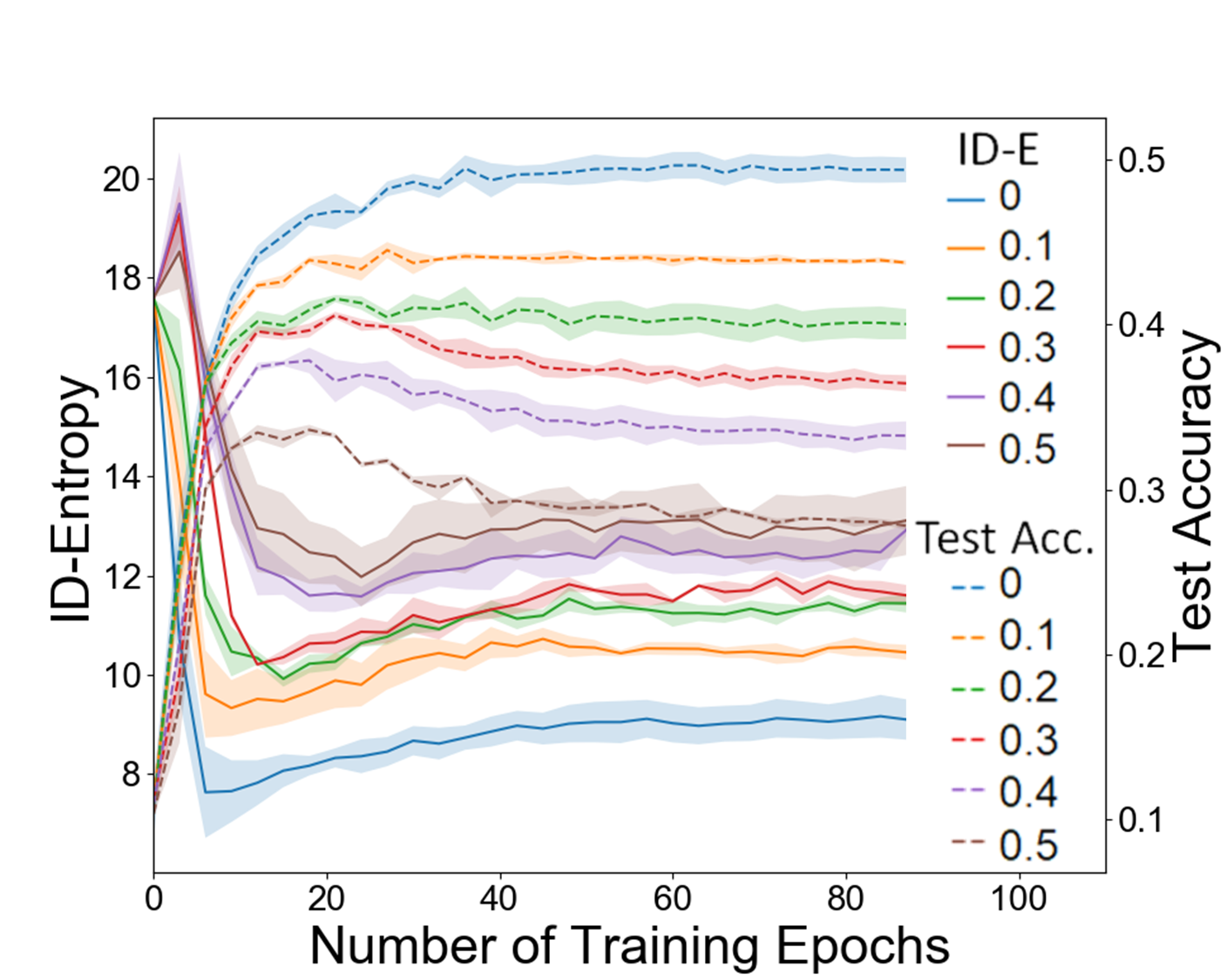}  
  \caption{CIFAR-10, Label Noise}
\end{subfigure}
\caption{Figures showing the various trends of ID-Entropy and the model performance, for models trained on MNIST and CIFAR-10. (a) and (d) show the average ID-Entropy and global ID (scaled) of the last hidden layer for a 4-layer CNN trained on MNIST and a Resnet-44 trained on CIFAR-10, for various choices of training data size, each leading to a different average test accuracy, (b) and (e) show the dependence between test loss (mean-squared error) and the ID-Entropy and global ID of the encoded feature layer, for a 4 layer Convolutional auto-encoder trained on MNIST and CIFAR-10, and (c) and (f) show the progression of average ID-Entropy of the last hidden layer for a 4-layer CNN trained on MNIST and a ResNet-44 trained on CIFAR-10 for different levels of label-noise in the training samples (from 0 to 0.5).}
\label{fig:all_figs}
\end{figure*}

\section{Experiments and Discussions}
Here we showcase two experiments, where we compute the ID-Entropy of the feature layers of \textit{classifier} and \textit{auto-encoder} architectures on MNIST and CIFAR-10, and contrast it with generalization performance. For all experiments we estimated the ID-entropy using Algorithm \ref{alg:one}, with $n=2000$ and $k=100$.  We used FisherS intrinsic dimensionality estimator in \cite{local_id_survey} as the global ID estimator $f_{ID}$, as we found it to be a robust choice. Following the results of Theorem \ref{thm:unsup} and Corollary \ref{corr:sup}, higher ID should point to worse generalization performance and vice-versa. Experiment details and further results are given in the Appendix, available in \cite{arxiv_id_entropy}, and code is available at: https://github.com/kentridgeai/ID-Entropy.
\subsection{ID-Entropy Computation: Classifiers} 

\textbf{Generalization Gap and ID-Entropy:} With a fixed 4-layer CNN architecture for MNIST and a ResNet-44 for CIFAR-10, we repeat the training routine with different choices of the training data size and random network initializations. This leads to significant changes in test accuracy across varying training data sizes, with the training accuracy of the resulting networks being $\approx$ 100\% in each case. Subsequently, we report the average ID-Entropy for each choice of training dataset size across 5 separate runs and their standard deviation, and plot them alongside the average test accuracy for the corresponding training dataset size in Figure \ref{fig:all_figs} (a). Furthermore, we also report the corresponding estimates of global ID using the FisherS estimator in each case. We see that ID-Entropy of the feature layer clearly reduces as the test accuracy increases, in agreement with the implications of Corollary \ref{corr:sup}, whereas we find that the global ID also yields similar trends for CIFAR-10, but not for MNIST. 

\noindent\textbf{Label-Noise Addition:} For both MNIST (4-layer CNN) and CIFAR-10 (ResNet-44), we add label noise by randomly changing the label of a training data point with a certain probability. It is well known that adding label noise makes the problem harder \cite{noi_d}, significantly impacting test performance, increasing the generalization gap. We summarize the results in Figure \ref{fig:all_figs} (c), where we plot the trajectories of average training error, average test error, and average ID-Entropy of the CNN feature layer (second to last), for various choices of the label noise probability $p=\{0,0.1,0.2,0.3,0.4,0.5\}$. All reported measures are computed by averaging over five runs. The results show that with greater label noise, the test accuracy declines and the ID-Entropy of the features increases. Furthermore, in each case, we also find that the training time trend of ID-Entropy follows an opposite trend compared to the network's test performance in most cases. Lastly, we see that for the no-noise case $p=0$, ID-Entropy shows a sharp decline, after which it either stabilizes or slowly increases with training.

\noindent\textbf{Information Bottleneck Discussion:} With respect to our proposed bottleneck principle, the ID-Entropy of the last hidden layer $T$, $\Hbar(T)$, is also the mutual information $\Ibar(X;T)$, due to P7. As our proposed bottleneck principle dictates that ideally $\Hbar(T)$ must be minimized, we verify this in our experiments. As the results in Figure 1 (c) and 1 (f) show, $\Hbar(T)$ has a consistent trend during training. Furthermore, in contrast to $I(X;T)$ in \cite{tishby_bottle}, which predicts two distinct phases in the respective order: fitting ($I(X;T)$ and $I(Y;T)$ increase) and compression ($I(X;T)$ decreases), we find that the phases are \textit{reversed} for $\Ibar(X;T)$. From the perspective of ID-Entropy, as seen in the label noise plots, the network can undergo up to two distinct phases: (i) \textit{simultaneous compression and fitting} (ID-Entropy decreases) and (ii) \textit{decompression} (ID-Entropy increases), and only in the latter phase the network potentially can overfit the dataset. This agrees with observations in \cite{iclr_2021_geom_NN} which finds that networks learn generalizable features in early epochs and only memorize later. When $p=0$ (no label noise), we find that the network simply minimizes $\Hbar(T)$ as training progresses, and thus the decompression phase isn't observed.

\subsection{ID-Entropy Computation: Auto-Encoders}
Like before, we plot the ID-Entropy of a 4-layer Convolutional auto-encoder (CAE) trained on the MNIST and CIFAR-10 datasets. In Figure \ref{fig:all_figs} (b) and (e), we plot the ID-Entropy and global ID of the feature layer, which is also the input layer for the decoder part of the CAE, and plot the average test loss (mean-squared error) during training. All reported measures are computed by averaging over five runs. Here we find that ID-Entropy correlates to the observed test loss in both datasets, which agrees with the implications of the Corollary \ref{thm:unsup}. Furthermore, in both cases, we find that the global ID does not adhere to any such clear trends. 

\vspace{-1mm}

\section{Reflections}
Through its properties, we find that the proposed measure of \textit{ID-Entropy} encodes more \textit{structural} information regarding the distribution, rather than statistical. Furthermore, the main differentiator between \textit{ID-Entropy} and conventional differential entropy is that \textit{ID-Entropy} preserves the intrinsic dimensionality information of the data manifold, whereas, for differential entropy and other cardinality-driven variants, this information can be lost, due to Cantor's argument.   

\textit{ID-Entropy} also seems to relate to causal factors that may be responsible for generating the data (as seen in Proposition \ref{prop2}), when we assume that cause-effect relationships are only via continuous functions, which yield f-relations between causes and effects. As such, f-relatedness also represents a fundamental property between two variables, and is independent of the probability distributions of the causal variables, i.e., changing the distribution of the cause $X$ through interventions still preserves the property that $X\fr Y$. Also, as in the definition of \textit{ID-Entropy}, note that only when $X\frf Y$, $X$ and $Y$ can be said to be homeomorphic. 
 

The results relating ID-Entropy to generalization showcase the relevance of ID-Entropy to generalization error both in the context of classifier and auto-encoder architectures. Empirically we see this to be the case as well. An interesting empirical finding was that for datasets that are primarily \textit{harder} to generalize, such as the ones with label-noise tested here, the ID-Entropy of the hidden layers shows a significant increase. Thus, to a certain extent, ID-Entropy showcases the level of memorization in the features learnt by the network, as noisy labels forces a neural network to memorize the labels, leading to poor test performance \cite{genun}.

\section*{Acknowledgements}
This research is supported by A*STAR, CISCO Systems (USA) Pte. Ltd and the National University of Singapore under its Cisco-NUS Accelerated Digital Economy Corporate Laboratory (Award I21001E0002). 
Additionally, we would like to thank the members of the Kent-Ridge AI research group at the National University of Singapore for helpful feedback and interesting discussions on this work.

\bibliography{egbib}


\newpage
\onecolumn
\appendix

\section*{\LARGE Appendix}

In this appendix, we provide the proofs of all theoretical results and properties of ID-Entropy mentioned in the main paper. Furthermore, we provide additional experimental details for all experiments conducted in the main paper. We also define a tighter version of $\epsilon$-ID-Entropy proposed in the main paper, called  $\epsilon$-log-ID-Entropy.

\section{Additional Results}
For all experiments, the associated architectures used to obtain the results in Section 6 of the main paper, is shown in Table \ref{tab:ars}. The headings of the following sections refer to the same experiment as their corresponding parts with the same headings in Section 6 of the main paper. For each experiment, all ID-Entropy computations were done on a fixed size validation set with a fixed $k$-nearest neighborhood. Note that this removes any effect that the choice of number of data samples has on the estimated ID. 
\subsection{MNIST}
\textbf{Generalization Gap and ID-Entropy}: For this experiment, we varied the number of training data samples roughly evenly between 5000 and 35000. For each size of training dataset, the networks were trained for 80 epochs with random initializations, and all reached 100\% training accuracy. All numbers are reported after 5-fold averaging.
\begin{table}[hb] 
\centering
\renewcommand{\arraystretch}{1.2}
\begin{tabular}{|c|c|c|}
\hline
{Dataset}                   & {Scenario}                                                               & {Architecture Flow}                                                       \\ \hline
\multirow{2}{*}{MNIST}    & \begin{tabular}[c]{@{}c@{}}Classifier \\ (CNN) \end{tabular}   & 25C(3x3)-2M-45C(3x3)-2M-60C(3x3)-7M-50F-10                              \\ \cline{2-3} 
                          & \begin{tabular}[c]{@{}c@{}}Convolutional \\ Auto-Encoder \end{tabular} & 16C(3x3)-2M-4C(3x3)-2M-16TC(2x2)(2,2)-1TC(2x2)(2,2)                     \\ \hline
\multirow{2}{*}{CIFAR-10} & \begin{tabular}[c]{@{}c@{}}Classifier \\ (CNN) \end{tabular}   & 16C(3x3)-7x(16RN-Block)-7x(32RN-Block)-7x(64RN-Block)-AP-10 \\ \cline{2-3} 
                          & \begin{tabular}[c]{@{}c@{}}Convolutional \\ Auto-Encoder  \end{tabular} & 24C(4x4)(2,2)-2M-48C(4x4)(2,2)-2M-24TC(4x4)(2,2)-3TC(4x4)(2,2)          \\ \hline
\end{tabular}%
\caption{Table showing the various architecture flows used in the experiments on MNIST and CIFAR-10 reoprted in the main paper. We describe the details of the layer representation as follows. (a) $k$M: ($k\times k$) Max-Pooling, (b) $m$C($k\times k$)($s,s$): ($k\times k$) Convolution with stride ($s,s$) ($s=1$ if not mentioned), yielding $m$ feature maps, (c) $m$TC($k\times k$)($s,s$): ($k\times k$) Transposed Convolution with stride ($s,s$) ($s=1$ if not mentioned), (d)AP: Average-Pooling, (e)$n$F: Fully connected layer with $n$ output nodes  (f) $n$x($m$RN-Block): $n$ ResNet Blocks each with $m$ output units as described in \cite{hers}.}
\label{tab:ars}
\end{table}

\textbf{Label-Noise:}
Five sets of training data of a fixed size of 2000 were created, and in each case a different labelling was generated by randomly changing a fraction $p$ of the original labels. The resulting trajectories of the ID-Entropy and Test Accuracy were plotted and the averages and deviations were reported in the main paper. Note that all networks here have the same initialization. This was then repeated for 6 different choices of $p$, $p=(0,0.1,0.2,0.3,0.4,0.5)$. 

\textbf{ID-Entropy Computation: Auto-Encoders:}
Here, we varied the number of training examples roughly evenly between 500 and 30,000, and in each case the average test accuracies are reported after 50 epochs of training with randomly initialized networks. All reported results are with five-fold averaging. 

\subsection{CIFAR-10}
\textbf{Generalization Gap and ID-Entropy}: We varied the number of training data samples roughly evenly between 20,000 and 30,000. For all choices of training dataset size, the networks were trained for 60 epochs with random initializations, and achieved similar training accuracies and significantly different test accuracies. All numbers are reported after 5-fold averaging.

\textbf{Label-Noise:} 
Five sets of training data of a fixed size of 5000 was created, and in each case a different labelling was generated by randomly changing a fraction ($p$) of the original labels. The resulting trajectories of the ID-Entropy and Test Accuracy were plotted and the averages and deviations were reported in the main paper. Note that all networks here have the same initialization. This was then repeated for 6 different choices of $p$, $p=(0,0.1,0.2,0.3,0.4,0.5)$. 

\textbf{ID-Entropy Computation: Auto-Encoders:}
Here, we varied the number of training examples roughly evenly between 250 and 5000, and in each case the average test accuracies are reported after 100 epochs of training with randomly initialized networks. All reported results are with five-fold averaging.

\subsection{Additional Discussions}
Note that in all cases, we wished to test whether ID-Entropy could be informative of the network's generalization performance, when a moderately sized training dataset was used. Note that our theoretical results hold for the limiting case of $m\arrow \infty$, but our experiments show that even for moderately sized $m$, thereby using a fraction of the total training examples, ID-Entropy can still be indicative of generalization performance to a certain extent. Furthermore, typically generalization error bounds use the number of training data examples in their estimation, and thus naturally yield smaller generalization gap as $m$ increases. However, here, as ID-Entropy is estimated from a fixed size validation set in each case, it does not use the information provided by the number of training data samples. Thus, given any trained network, its ID-Entropy can be computed without needing additional information about the number of training data samples used to train the same.

\subsection{Tighter Variant of ID-Entropy}

We refer to (1) in the main paper, which is the step that defines the \textit{$\epsilon$-Neighborhood Intrinsic Dimension}, as follows. 
\begin{equation} \label{eq:master_B}
    d_{\epsilon}(\rho) = \min n \ \mbox{ s.t. } \ (v_1,v_2,..v_n) \frf X_{\epsilon}^{\rho}
\end{equation}

We note that both $\epsilon$-ID-Entropy and ID-Entropy represent the expected value of the above dimension across the distribution, for different neighborhood sizes. Here, we show that a tighter, and more intuitive version of $\epsilon$-ID-Entropy can be formulated as follows. We define the \textit{$\epsilon$-log-Neighborhood Intrinsic Dimension} as:

\begin{equation} \label{eq:master_C}
    d_{\epsilon}^{\log}(\rho) = \min \left ( n + H(v_{discrete}) \right)   \ \mbox{ s.t. } \ (v_1,v_2,..v_n,v_{discrete}) \frf X_{\epsilon}^{\rho},
\end{equation}
where $v_1,v_2,..v_n$ represent continuous 1-D RVs, $v_{discrete}$ is a discrete RV which is drawn from a set of any arbitrary cardinality, and $H$ represents the discrete entropy operator. Note that $d_{\epsilon}^{\log}(\rho)\leq d_{\epsilon}(\rho)$.

Using the above, we can define $\epsilon$-log-ID-Entropy as follows.
\begin{definition}
(\textbf{$\epsilon$-log-ID-Entropy}). We define the $\epsilon$-log-ID-Entropy $\Hbar^{\epsilon}_{\log}(X)$ of a RV $X$ with underlying distribution $P(X)$ as
\begin{equation}
    \Hbar^{\epsilon}_{\log}(X) = \E_{\rho\sim P(X)}\left [d_{\epsilon}^{\log}(\rho)\right].
\end{equation}
\end{definition}

Note that $\epsilon$-log-ID-Entropy is a tighter version of $\epsilon$-ID-Entropy, as we have that $\Hbar^{\epsilon}_{\log}(X)\leq\Hbar^{\epsilon}(X)$. We consider $\epsilon$-log-ID-Entropy to be a more intuitive variant of ID-Entropy as it more readily connects to discrete Shannon entropy in the case when the distribution over $X$ behaves like a discrete distribution. In fact, it is trivial to show that when $X$ is discrete, i.e. the cardinality of the support of $X$ is finite, then  $\Hbar^{\epsilon}_{\log}(X)=H(X)$.  
\section{Proofs of Theoretical Results} 

\subsection{Proofs of the properties of \textit{ID-Entropy}}

The proofs are provided below for each property of ID-Entropy and its extensions described in Section 3.2 of the main paper.
\begin{enumerate}[leftmargin=0cm,topsep=0cm,label={\bf P\arabic*},wide]

\item $0\leq \Hbar( X)\leq dim(X)$. 
\begin{proof}
First, note that as \textit{ID-Entropy} is the average of \textit{$\epsilon$-Neighborhood Intrinsic Dimension} ($\epsilon$-NID) which cannot be less than $0$, and thus $\Hbar(X)\geq0$. Next, consider any point $X_0$ within the support of $P(X)$. Using the definition of $\epsilon$-NID $d_{\epsilon}(X_0)$, we have that as $\epsilon \arrow 0$, there exists RVs $v_1,v_2,v...,v_{d_{\epsilon}(X_0)}$, such that 
\begin{equation}
    (v_1,v_2,v...,v_{d_{\epsilon}(X_0)}) \frf X_{\epsilon}^{X_0}.
\end{equation}
Using the locally Euclidean constraint, we have that $X_{\epsilon}^{X_0}$ is homeomorphic to $k$ for some $k\leq d$. Thus, as a symmetric f-relation implies a homeomorphism, we have from the definition of $\epsilon$-NID, that $d_{\epsilon}(X_0)\leq k\leq d$. Lastly, as $\Hbar(X)=\E_{X_0\sim P}[d_{\epsilon}(X_0)]$ as $\epsilon\arrow0$, we have that  $\Hbar(X)\leq dim(X)$. 

\end{proof}
\item $\Hbar(X)\leq \Hbar^{\epsilon}(X)$ for any $\epsilon>0$. 
\begin{proof}
Continuing with the same notation as from the proof of P1, we note that for any $\epsilon>0$, we will have that $d_{\epsilon}(X_0)\geq \lim_{\epsilon \arrow 0^{+}}d_{\epsilon}(X_0)$, as the support of the RV $\lim_{\epsilon \arrow 0^{+}}X_{\epsilon}^{X_0}$ will be a subset of the support of the RV $X_{\epsilon}^{X_0}$. The result then follows from the fact that $\Hbar^{\epsilon}(X)$ is the expected value of $d_{\epsilon}(X_0)$ where $X_0$ is generated according to the underlying data distribution. 
\end{proof}

\item $\Hbar(\alpha X)=\Hbar(X)$     \begin{proof}
The proof follows from noting that scaling $X$ would preserve the local $\epsilon$-NID, as  $(v_1,v_2,...v_k)\fr X$ implies $(v_1,v_2,...v_k)\fr \alpha X$ as well. As $\Hbar(X)$ represents the expected value of the intrinsic dimension, and as the scaling function is bijective, it remains unchanged. 
\end{proof}
\item $\Hbar( X,Y)=\Hbar(Y,X)$.\label{FEP2_B}
\begin{proof}
This result immediately follows from the observation that permuting the dimensions of $X$ does not change the intrinsic dimension at any point, as $(v_1,v_2,...v_k)\fr (X_1,X_2,..X_k)$ also implies $(v_1,v_2,...v_k)\fr (X_{(1)},X_{(2)},..X_{(k)})$, where $(X_{(1)},X_{(2)},..X_{(k)})$ represents a permutation of $(X_1,X_2,..X_k)$. Thus, $\Hbar( X,Y)=\Hbar(Y,X)$.
\end{proof}
\item $\Hbar( X,Y)\geq \Hbar( X)$ and $\Hbar( X,Y)\geq \Hbar( Y)$.  
\begin{proof}
Let us consider the $\epsilon$-NID $d_{\epsilon}(X_0)$ as $\epsilon \arrow 0^{+}$, at some $X_0\in \R^d$. Then, for all $X'\sim P_{\epsilon}^{X_0}$, there exists  $d_{\epsilon}(X_0)$ RVs $v_1,v_2,...,v_{d_{\epsilon}(X_0)}$, such that $(v_1,v_2,...,v_{d_{\epsilon}(X_0)}) \frf X'$. Now, if we consider both $X$ and $Y$ together, we have for some $Y_0\in \R^l$, the RV $Y'\sim P_{\epsilon}^{Y_0}$ as $\epsilon\arrow 0^{+}$. 

Let the $\epsilon$-NID of $(X,Y)$ at $(X_0,Y_0)$ be $d_{\epsilon}(X_0,Y_0)$. Then we have that for some  RVs $v_1,v_2,...,v_{d_{\epsilon}(X_0,Y_0)}$, $(v_1,v_2,...,v_{d_{\epsilon}(X_0,Y_0)}) \frf (X',Y')$. As $\epsilon \arrow 0^{+}$, $d_{\epsilon}(X_0,Y_0)$ would represent the dimensionality of the manifold at $(X_0,Y_0)$, and we can show using proof by contradiction that $d_{\epsilon}(X_0,Y_0)\geq d_{\epsilon}(X_0)$. 

Assume that $d_{\epsilon}(X_0,Y_0)<d_{\epsilon}(X_0)$, then in the $\epsilon$ neighborhood around $X_0$, there exists two points $A$ and $B$ which have infinite geodesic distance, as otherwise $d_{\epsilon}(X_0,Y_0)<d_{\epsilon}(X_0)$ wouldn't hold. This will violate our assumption (b) or (c), and thus yields that $d_{\epsilon}(X_0,Y_0)\geq d_{\epsilon}(X_0)$. Thus,   $\Hbar^{\epsilon}( X,Y)\geq \Hbar^{\epsilon}( X)$ as $\epsilon\arrow 0^{+}$. Similarly, it follows that $\Hbar( X,Y)\geq \Hbar( Y)$ as well. 
\end{proof}
\item $\Hbar( X,Y)\leq \Hbar( X) +  \Hbar(Y)$.
\begin{proof}
Continuing with the notation in previous proofs, we note that the $\epsilon$-NID $d_{\epsilon}(X_0,Y_0)$ for $\epsilon\arrow0^{+}$ would represent the dimensionality of the manifold at $(X_0,Y_0)$. Thus, in this case, it directly follows that $d_{\epsilon}(X_0,Y_0)\leq d_{\epsilon}(X_0)+d_{\epsilon}(Y_0)$. Taking the expectation over all $(X_0,Y_0)\sim P(X,Y)$, we obtain $\Hbar^{\epsilon}( X,Y)\leq \Hbar^{\epsilon}( X) +  \Hbar^{\epsilon}(Y)$ as $\epsilon\arrow 0^{+}$. 
\end{proof}
\item If $X\fr Y$,  $\Hbar( X,Y) = \Hbar(X)\geq \Hbar(Y)$. 
\begin{proof}
Continuing with the notation used in the previous proofs, we note that for any $X_0$, the $\epsilon$-NID $d_{\epsilon}(X_0)$ as $\epsilon \arrow 0^{+}$ yields the symmetric f-relation between $(v_1,v_2,...,v_{d_{\epsilon}(X_0)}) \frf (X')$. Now, as $X \fr Y$, we can represent $Y=f(X)$ for some continuous function $f$. We will show that $(v_1,v_2,...,v_{d_{\epsilon}(X_0)}) \frf (X',f(X'))$. 

First, note that $(v_1,v_2,...,v_{d_{\epsilon}(X_0)}) \fr (X',f(X'))$ follows trivially, as no additional RVs are required to predict $f(X')$, given $X'$. We can show that $ (X',f(X'))\fr (v_1,v_2,...,v_{d_{\epsilon}(X_0)}) $ as follows. 

Note that $ X'\fr (v_1,v_2,...,v_{d_{\epsilon}(X_0)}) $ follows directly from the fact that $(v_1,v_2,...,v_{d_{\epsilon}(X_0)}) \frf (X')$. Next, we see that any infinitesimal change to $(X',f(X'))$ can only occur through changes in $X'$, as $Y=f(X')$ is a deterministic function of $X'$. Thus, as $ X'\fr (v_1,v_2,...,v_{d_{\epsilon}(X_0)}) $, it also follows that $ (X',f(X'))\fr (v_1,v_2,...,v_{d_{\epsilon}(X_0)}) $. 
This yields $(v_1,v_2,...,v_{d_{\epsilon}(X_0)}) \fr (X',f(X'))$ as well and thus $(v_1,v_2,...,v_{d_{\epsilon}(X_0)}) \frf (X',f(X'))$ . 

Thus, we have that the $\epsilon$-NID $d_{\epsilon}(X_0,Y_0)\leq d_{\epsilon}(X_0)$ as $\epsilon\arrow 0^{+}$.   Now, from the proof of P3 we have that the $\epsilon$-NID of joint variables $(X,Y)$ at some $(X_0,Y_0)$, which is denoted by  $d_{\epsilon}(X_0,Y_0)$, satisfies $d_{\epsilon}(X_0,Y_0)\geq d_{\epsilon}(X_0)$.

This yields $d_{\epsilon}(X_0,Y_0)= d_{\epsilon}(X_0)$, and thus $\Hbar^{\epsilon}( X,Y) = \Hbar^{\epsilon}(X)$ as $\epsilon\arrow 0^{+}$. The result $ \Hbar(X)\geq  \Hbar(Y)$ directly follows from the application of P3. 
\end{proof}
\item (Data Processing Inequality). If $X \fr Y \fr Z$, then we have that $I_{f}(X;Y) \geq I_{f}(X;Z)$. 
\begin{proof}
First, we note that $X \fr Y \fr Z$ also implies $X \fr Z$. Next, observe that $I_{f}(X;Y) \geq I_{f}(X;Z)$ can be rewritten as $\Hbar(Y)-\Hbar(X,Y) \geq \Hbar(Z)-\Hbar(X,Z)$. From P7, it follows that  $\Hbar(X,Y)=\Hbar(X)$ and $\Hbar(X,Z)=\Hbar(X)$. Thus, it suffices to show that $\Hbar(Y)\geq \Hbar(Z)$. We note that as $Y \fr Z$,  P7 also yields $\Hbar(Y)\geq \Hbar(Z)$.

\end{proof}
\item (DPI for General Markov Chains). If $X \rightarrow Y \rightarrow Z$ represents a general Markov Chain, then we have that $\Ibar(X;Y) \geq \Ibar(X;Z)$. 
\begin{proof}
We note that the general Markov Chain case can be represented as a combination of two graphs, 
 $(X,\epsilon_1)\fr Y$ and $(Y,\epsilon_2) \fr X$, where $\epsilon_1$ and $\epsilon_2$ are random variables independent of $X$. Note that the new graphs are deterministic as before and the arrows represent f-relations, as the stochasticity of the general Markov Chain has now been subsumed by the noisy variables. Thus for some continuous $f_1$ and $f_2$ , we have $f_1(X,\epsilon_1)=Y$, and $f_2(Y,\epsilon_2)=Z$. To prove our result, we simply separate the intrinsic dimensionality of both $Y$ and $Z$ at any point $p$, denoted by $d_Y(p)$ and $d_Z(p)$, into two parts. They are: $d_Y(p,X)$ and $d_Z(p,X)$ which represents the contribution of $X$, while $d_Y(p,\epsilon_1)$ and $d_Z(p,[\epsilon_1,\epsilon_2])$ represent the contribution of the noisy random variables. With this, it follows that $\Ibar(X;Y)=\mathbb{E}_p[d_Y(p,X)]$ and $\Ibar(X;Z)=\mathbb{E}_p[d_Z(p,X)]$. Lastly, as $X$ is processed via two functions in a compositonal manner, it follows that $d_Y(p,X)\geq d_Z(p,X)$. Thus, we also have $\mathbb{E}_p[d_Y(p,X)]\geq \mathbb{E}_p[d_Z(p,X)]$, yielding $\Ibar(X;Y)\geq \Ibar(X;Z)$.
\end{proof}

\item If $X\frf Y$, $\Hbar(X)=\Hbar(Y)$.
\begin{proof}
$X\frf Y$ implies that $X\fr Y$ and $Y\fr X$. From P7, we have that $X\fr Y$ implies $\Hbar(Y)\leq \Hbar(X)$. Similarly, we also have that $Y\fr X$  implies $\Hbar(X)\leq \Hbar(Y)$. This yields, $\Hbar(Y)= \Hbar(X)$
\end{proof}
\item Let $X\in\R^d$ be represented as $d$ RVs $(x_1,x_2,..x_d)$. Consider functions $f_1,f_2,..f_d$ all from $\R \xrightarrow{ } \R$, such that they are order preserving and continuous ($x<y$ implies $f_i(x)<f_i(y)$). Then we have that, $\Hbar(X)=\Hbar(x_1,x_2,..x_d)=\Hbar(f_1(x_1),f_2(x_2),..f_d(x_d))$. We denote this property of \textit{ID-Entropy} as \textit{sampling-invariance}, where $f_1,f_2,..f_n$ represent the re-sampling functions.
\begin{proof}
Note than any order preserving function which is also continuous, has a continuous inverse as well. Thus, we have that for all $i$, $x_i\fr f_i(x_i)$, and subsequently if we let $X'=[f_1(x_1),f_2(x_2),..f_d(x_d)]$, we have $X \frf X'$. Using P9, we have $\Hbar(X)=\Hbar(X')$.
\end{proof}
    
\end{enumerate}

\subsection{Proofs of Propositions 1-2 and Remarks on Definitions 8 \& 9}

\setcounter{prop}{0}

\setcounter{theorem}{0}
\setcounter{lemma}{0}

\begin{prop}\label{prop1_B}
Let us consider distributions $P(X)$ such that for all points $X_0 \in \R^d$ which lie in the support of $P(X)$, and for  $\epsilon \xrightarrow[]{} 0^{+}$ the following holds. 
\begin{equation}
    \int_{\lVert X-X_0 \rVert \leq \epsilon} P(X)dX = C(\epsilon)
\end{equation}
where $C(\epsilon)$ is some function of $\epsilon$. The above imposes a constraint where the distribution is evenly distributed across the support of $P$. Next, we consider a slightly modified notion of probability density $P_{supp}(X)$, which measures how the probability is distributed within the local support of the distribution. Then, we have for $\epsilon\xrightarrow[]{} 0^{+}$,  
\begin{equation}
    \E_{X} \left[\log(P_{supp}(X))\right] = \log(\epsilon)\Hbar(X) +  \log \frac{K}{C(\epsilon)},
\end{equation}
for some fixed scalar $K$. 
\end{prop}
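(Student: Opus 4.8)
The plan is to establish the identity pointwise and then integrate against $P$; the engine of the whole argument is the local volume scaling of the support. Fix a point $X_0$ in the support of $P$ and let $k=d(X_0)$ be its local intrinsic dimension, which by assumption A1 is a well-defined nonnegative integer and which, as $\epsilon\arrow 0^{+}$, agrees with the limiting $\epsilon$-NID used to define $\Hbar$. Because the support is locally Euclidean, for small $\epsilon$ the set $\{\lVert X-X_0\rVert\le\epsilon\}\cap\mathrm{supp}(P)$ is, up to curvature corrections of lower order, a flat $k$-dimensional ball of radius $\epsilon$, whose $k$-dimensional volume is $\omega_k\,\epsilon^{k}$ with $\omega_k$ the volume of the unit $k$-ball. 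This is the only geometric input needed.

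First I would combine the equi-probability hypothesis with the definition of $P_{supp}$. The hypothesis fixes the mass captured by the ambient $\epsilon$-ball to the common value $C(\epsilon)$ at every support point, while the definition $P_{supp}(X_0)=\lim_{vol(V)\to 0}\int_{V}P\,dX/vol(V)$ identifies $P_{supp}(X_0)$ as the density of $P$ with respect to $k$-dimensional volume. To leading order as $\epsilon\arrow 0^{+}$ the density is constant across the tiny ball, so the mass factorizes as $C(\epsilon)=P_{supp}(X_0)\,\omega_{k}\,\epsilon^{k}\,(1+o(1))$. Taking logarithms and solving for $\log P_{supp}(X_0)$ expresses it as $\log C(\epsilon)$, a term linear in $\log\epsilon$ with coefficient equal to the local dimension $d(X_0)$, and the purely dimension-dependent constant $-\log\omega_{d(X_0)}$. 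Applying $\E_{X}$ turns the linear term into $\log\epsilon$ times $\E_{X}[d(X)]=\Hbar(X)$, leaves $\log C(\epsilon)$ as an $X$-free term, and turns $\E_{X}[\log\omega_{d(X)}]$ into a scalar that does not depend on $\epsilon$ (since $\omega_{d(X)}$ depends only on the integer dimension at $X$). Absorbing that scalar into the single constant $K$ and collecting the $C(\epsilon)$ contribution reproduces the stated form $\log\epsilon\,\Hbar(X)+\log\frac{K}{C(\epsilon)}$, modulo the overall entropy-sign convention adopted for differential entropy.

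The hard part will be making the volume-scaling step rigorous and uniform enough to exchange the $\epsilon\arrow 0^{+}$ limit with the expectation $\E_{X}$. One must bound the curvature and second-order error in $\mathcal{H}^{d(X_0)}(\{\lVert X-X_0\rVert\le\epsilon\}\cap\mathrm{supp}(P))=\omega_{d(X_0)}\epsilon^{d(X_0)}(1+o(1))$ uniformly over the support, and treat points where A1 permits a union of manifolds of differing dimension, at which a small ball may meet several sheets at once and the naive density behaves singularly. Assumptions A2 and A3 (finite geodesic distances and countably many connected components) are precisely what keep such multi-sheet contributions controlled and keep $d(X)$ locally constant almost everywhere, so that both $\E_{X}[d(X)]=\Hbar(X)$ and the folded constant $\E_{X}[\log\omega_{d(X)}]$ are well defined and finite; verifying these uniformity and integrability conditions is the substantive content behind the otherwise elementary local computation.
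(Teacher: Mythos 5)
Your proof is correct and takes essentially the same route as the paper's: approximate the volume of the support inside the $\epsilon$-ball at $X_0$ by a constant times $\epsilon^{d_{\epsilon}(X_0)}$, combine this with the equi-mass hypothesis $C(\epsilon)$, take logarithms, and average over $X\sim P$ to get $\Hbar(X)\log\epsilon$ plus an $\epsilon$-free term. If anything you are more careful than the paper, since you make the dimension-dependent constant $\omega_{d(X_0)}$ explicit and absorb $\E_X\left[\log \omega_{d(X)}\right]$ into the single scalar $K$ (the paper writes one fixed $K$ for all points), and the sign mismatch you flag is genuine: the paper's proof inserts a minus sign that is inconsistent with its own main-text definition of $P_{supp}$ as local mass divided by local support volume.
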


\begin{proof}
First, we observe that as $\epsilon\arrow 0^{+}$ the volume of the support of the distribution within the ball defined by $\lVert X-X_0 \rVert \leq \epsilon$ would depend on the $\epsilon$-NID $d_{\epsilon}(X_0)$ at $X_0$. Specifically, note that the volume of the support of the distribution within $\lVert X-X_0 \rVert \leq \epsilon$ will converge to a scalar $K\epsilon^{d_{\epsilon}(X_0)}$ as $\epsilon\arrow0^{+}$, as $d_{\epsilon}(X_0)$ represents the intrinsic dimensionality of the manifold at $X_0$. The result follows from this observation, as we can write for $\epsilon\arrow 0^{+}$: 
\begin{align}
    \E_{X_0} \left[\log(P_{supp}(X_0))\right] &= \E_{X_0} \left[-\log\left ( \frac{\int_{\lVert X-X_0 \rVert \leq \epsilon,} P(X)dX}{K\epsilon^{d_{\epsilon}(X_0)}} \right)\right] = \E_{X_0} \left[-\log\left ( \frac{C(\epsilon)}{K\epsilon^{d_{\epsilon}(X_0)}} \right)\right] \nonumber \\ & \quad \quad \quad \quad \quad = \log(\epsilon)\Hbar(X) +  \log \frac{K}{C(\epsilon)}. 
\end{align}
\end{proof}

\begin{prop}\label{prop2_B_B}
Consider any arbitrary causal diagram (directed acyclic graph) consisting of an observable RV $X\in \R^d$, and a set of hidden variables which may or may not represent causes in $C = \{C_1,C_2,...,C_n\}$ where $C_i\in \R \ \forall i$. Furthermore, assume that if $(X_1,X_2,...,X_p)$ are the parent nodes of $Y$, then we have that $(X_1,X_2,...,X_p)\fr Y$. Given this, we have
\begin{equation}
    \Hbar(X) \leq \min k \  \mbox{ s.t. } \  (C_{a(1)},C_{a(2)},..C_{a(k)}) \fr X, 
\end{equation}
where $1 \leq a(i)\leq n$. Thus, the \textit{ID-Entropy} of $X$ is less than the minimum number of variables in $C$ such that $X$ is f-related to the same.  
\end{prop}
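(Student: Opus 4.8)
The plan is to reduce the claim to two already-established properties, P7 (continuous processing cannot increase ID-Entropy) and P1 (ID-Entropy is bounded above by the codomain dimension), with the causal/DAG hypotheses serving mainly to guarantee that the minimum on the right-hand side is taken over a non-empty collection.

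First I would fix notation: let $k^{*}$ denote the quantity $\min k$ such that $(C_{a(1)},\dots,C_{a(k)})\fr X$, and let $S=(C_{a(1)},\dots,C_{a(k^{*})})$ be a subset of $C$ of size $k^{*}$ attaining it, so that $S\fr X$. To see that such a subset exists, I would walk backwards through the DAG from $X$: by hypothesis every node is an f-relation of its parents, and since a composition of continuous maps is continuous, composing these parent-level f-relations along the directed paths feeding into $X$ exhibits $X$ as a continuous function of its root ancestors, which lie in $C$. Hence at least one f-relating subset of $C$ exists and $k^{*}$ is finite (in a degenerate model where no subset of $C$ f-relates to $X$, we have $k^{*}=\infty$ and the inequality is vacuous).

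Second, I would apply the multi-RV form of P7 to the f-relation $S\fr X$. Since $X$ is obtained from $S$ by a continuous map, P7 gives $\Hbar(X)\leq\Hbar(S)=\Hbar(C_{a(1)},\dots,C_{a(k^{*})})$. Finally, because each $C_i\in\R$, the concatenation $S$ is a random vector valued in $\R^{k^{*}}$, so P1 yields $\Hbar(S)\leq dim(S)=k^{*}$. Chaining the two inequalities gives $\Hbar(X)\leq k^{*}$, which is exactly the assertion.

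The routine inequality $\Hbar(X)\le\Hbar(S)\le k^{*}$ is immediate once the properties are in hand, so the only real work is the existence/composition step: one must argue carefully that the pointwise parent-level f-relations compose into a single global continuous map from a subset of $C$ onto $X$, and that this composite really involves only $C$-nodes, as opposed to other latent or intermediate nodes that may appear in an arbitrary DAG. The remaining bookkeeping is checking that the extensions of P7 and P1 to tuples of RVs apply verbatim, which is guaranteed by the paper's convention that f-relations and the entropy bounds extend to concatenated variables.
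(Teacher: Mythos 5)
Your proposal is correct and follows essentially the same route as the paper's proof: for any subset $\tilde{C}=(C_{a(1)},\dots,C_{a(k)})$ with $\tilde{C}\fr X$, apply P7 to get $\Hbar(X)\leq\Hbar(\tilde{C})$, then P1 to get $\Hbar(\tilde{C})\leq dim(\tilde{C})=k$, and take the minimum over all such subsets. Your additional existence/composition argument (walking the DAG to show some f-relating subset of $C$ exists, with the vacuous case $k^{*}=\infty$ handled) is a sound supplement that the paper's proof simply leaves implicit.
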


\begin{proof}
Given any causal diagram, let $\{C_{a(1)},C_{a(2)},..C_{a(k)}\}$ represent a subset of hidden cause variables $C$, such that $(C_{a(1)},C_{a(2)},..C_{a(k)}) \fr X$. Let $\tilde{C}=[C_{a(1)},C_{a(2)},..C_{a(k)}]\in \R^k$ be the concatenation of the $k$ causes into a $k$ dimensional vector. As $\tilde{C} \fr X$, it follows from P7 that $\Hbar(X) \leq \Hbar(\tilde{C})$. As $\Hbar(A)\leq dim(A)$ for any continuous RV $A$, we have that $\Hbar(X) \leq \Hbar(\tilde{C}) \leq k$. The result then follows by considering all possible subsets $\tilde{C}$ of $C$, which satisfy $(C_{a(1)},C_{a(2)},..C_{a(k)}) \fr X$. 
\end{proof}
\setcounter{definition}{7}
\begin{definition}\label{prop3_B_B}
(\textbf{Bottleneck for Auto-Encoders}) Consider the class of auto-encoders which can be represented sequentially as $X \fr T \fr \tilde{X}$, where $T$ is the encoding of $X$ as some latent feature representation. Given this, we first have that $  \Hbar(X)=\Hbar(T)\leq \Hbar^{\epsilon}(T) \leq dim(T)$.
Thus, we outline the condition for a \textit{relaxed} information bottleneck criterion as follows.
\begin{equation}
    \min \Hbar^{\epsilon}(T) \ \mbox{ s.t. } \  \tilde{X}=X.
\end{equation}

\end{definition}
Note that below, we provide further mathematical intuition for the proposed bottleneck principle for auto-encoders.
\begin{remark}
As $X=\tilde{X}$, we have that $T\fr X$. Applying P7, we then have that $\Hbar(X)\leq \Hbar(T)\leq \Hbar^{\epsilon}(T)$. Thus, among all possible latent spaces $T$ that satisfy $X \fr T \fr \tilde{X}$, we will have that  $\Hbar(X)\leq  \Hbar^{\epsilon}$. Thus minimizing $\Hbar^{\epsilon}$ subject to the constraint that the decoder has a precise fit on the training data, will yield the smallest upper bound on the ID-Entropy of the data. This leads to the subsequent bottleneck criterion.
\end{remark}

\begin{definition}\label{prop4_B}
(\textbf{Bottleneck for Classifiers}) Consider the class of feedforward supervised architectures which can be represented sequentially as $X \fr T \fr \tilde{Y}$, where $T$ represents any hidden layer within the architecture. Given this, we outline the conditions for a \textit{relaxed} information bottleneck criterion as follows.
\begin{equation}
    \min \Hbar^{\epsilon}(T) \ \mbox{ s.t. } \ \tilde{Y}=Y.
\end{equation}
\end{definition}
Below, we provide further mathematical intuition for the proposed bottleneck principle for classifiers.
\begin{remark}
As $Y=\tilde{Y}$, the quantity $\Hbar(X|T)$ is indicative of the level of compression in the features $T$, while being able to adequately generate the output labels. A natural bottleneck thus is to maximize $\Hbar(X|T)$, with the constraint that $T$ can still perfectly fit the given labels, which yields
\begin{equation}
    \max \Hbar(X|T) = \max \left(\Hbar(X) - \Hbar(T) \right) = \Hbar(X) - \left( \min \Hbar(T) \right). 
\end{equation}
The above follows from P7 and the definition of joint ID-Entropy, as $X \fr T$. Furthermore, we see that the maximization eventually yields a minimization of $\min \Hbar(T)$. As we have $\Hbar(T)\leq \Hbar^{\epsilon}(T)$, this leads to $\min \Hbar(T)\leq \min  \Hbar^{\epsilon}(T)$, yielding the final criteria. 
\end{remark}

\section{Proof of Theorem 1 and Corollary 1.1}

\begin{theorem}  \label{thm:unsup_B}
\textbf{(Auto-Encoders)} We consider a scenario with an auto-encoder network.  Let $S=\{X_1,X_2,..,X_m\}$, be the training data points. Also, let $\widehat{err}_{MSE}(S)$ represent the training mean-squared reconstruction error and $err_{MSE}(g(f_T(X)),X)=\mathbb{E}_{S}[\widehat{err}_{MSE}(S)]$ be the generalization error, where $f_T(X)$ represents the function at any layer $T$, and $g(.)$ represents the reconstruction function. Let $g_{true}(.)$ be the ground truth reconstruction function. We consider architectures which yield continuous $f_T$ and $g$. Given that $X\sim P$, let $d_{max}=\max_{X\sim P,i} \lVert X-X_i \rVert$. Then, as $m\xrightarrow[]{} \infty$, for Lipschitz continuous $g$, $g_{true}$ and $f_T$, and for some $C>0$ we have that
\begin{equation}
    err_{MSE}(g(f_T(X)),X) \leq \widehat{err}_{MSE}(S) + \frac{C d_{max}^2\Hbar(T)}{\Hbar(T)+2},
\end{equation}
where $C$ depends on Lipschitz constants of $g$, $g_{true}$ and $f_T$. 
\end{theorem}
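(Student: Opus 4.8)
The plan is to reduce the population reconstruction error to a smoothness-plus-covering estimate carried out on the \emph{feature} manifold, which is what makes $\Hbar(T)$ rather than $\Hbar(X)$ appear. First I would introduce the reconstruction error function on feature space, $e(t)=g(t)-g_{true}(t)$, where $g_{true}$ is the ideal decoder satisfying $g_{true}(f_T(X))=X$ on the support of $P$; then $g(f_T(X))-X=e(f_T(X))$, and $e$ is Lipschitz with constant $L_e\leq L_g+L_{g_{true}}$. Writing $t=f_T(X)$ and letting $t_j$ denote the nearest training feature to $t$, the triangle inequality with Lipschitzness gives $\lVert e(t)\rVert\leq \lVert e(t_j)\rVert + L_e\lVert t-t_j\rVert$. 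Squaring, taking $\E_{X\sim P}$, and noting that as $m\arrow\infty$ the Voronoi cells carry asymptotically equal probability mass so that $\E_X[\lVert e(t_j)\rVert^2]\arrow \widehat{err}_{MSE}(S)$, the problem collapses to bounding the excess term $\E_X[\lVert t-t_j\rVert^2]$, the expected squared nearest-neighbour distance measured in feature space. (The cross term is of lower order in the regime where the network fits the data, $\widehat{err}_{MSE}(S)\arrow 0$, and is what must be controlled to obtain the clean additive form.)

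Second, I would estimate $\E_X[\lVert t-t_j\rVert^2]$ through the local geometry of the pushforward distribution of $T$. As $m\arrow\infty$, around a base point $\rho$ the support of $T$ is, by assumption A1, locally Euclidean of dimension $d_{\epsilon}(\rho)$, and the Voronoi cell of a training feature is well approximated by a $d_{\epsilon}(\rho)$-dimensional ball of radius at most the local covering radius. The elementary fact I would invoke is that the second moment of a uniform distribution on a $d$-ball of radius $r$ equals $\frac{d}{d+2}r^2$. Since the nearest \emph{input} neighbour maps under the $L_{f_T}$-Lipschitz encoder to a feature within $L_{f_T}d_{max}$ of $t$, the local radius is bounded by $L_{f_T}d_{max}$, giving the conditional estimate $\E[\lVert t-t_j\rVert^2 \mid \rho]\leq \frac{d_{\epsilon}(\rho)}{d_{\epsilon}(\rho)+2}\,L_{f_T}^2 d_{max}^2$.

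Finally, I would average over $\rho\sim P$. The map $x\mapsto \frac{x}{x+2}=1-\frac{2}{x+2}$ is concave, so Jensen's inequality yields $\E_\rho\big[\frac{d_{\epsilon}(\rho)}{d_{\epsilon}(\rho)+2}\big]\leq \frac{\E_\rho[d_{\epsilon}(\rho)]}{\E_\rho[d_{\epsilon}(\rho)]+2}=\frac{\Hbar(T)}{\Hbar(T)+2}$ as $\epsilon\arrow 0^{+}$, recalling $\Hbar(T)=\E_\rho[d_{\epsilon}(\rho)]$. Assembling the three steps gives $err_{MSE}(g(f_T(X)),X)\leq \widehat{err}_{MSE}(S)+C d_{max}^2\,\frac{\Hbar(T)}{\Hbar(T)+2}$ with $C=(L_g+L_{g_{true}})^2 L_{f_T}^2$, which depends only on the stated Lipschitz constants. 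I expect the main obstacle to be rigorously justifying the $d$-ball approximation of the Voronoi cells in the $m\arrow\infty$ limit, namely that the expected squared nearest-neighbour distance on a $d$-dimensional manifold really concentrates on the $\frac{d}{d+2}$ constant uniformly over the support, together with the additive separation of $\widehat{err}_{MSE}(S)$ from the excess term, which is precisely where the cross term needs to be handled.
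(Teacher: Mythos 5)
Your proposal follows essentially the same route as the paper's proof: a Voronoi/nearest-neighbour decomposition with asymptotically uniform cell mass as $m\arrow\infty$, transfer of the $d_{max}$ radius to feature space via the Lipschitz constant of $f_T$, the second-moment identity $\frac{d}{d+2}r^2$ for a uniform distribution on a $d$-ball, and a Jensen/concavity step averaging $\frac{d_i}{d_i+2}$ to reach $\frac{\Hbar(T)}{\Hbar(T)+2}$, with the same final constant structure. The only differences are cosmetic and in your favour: you bundle $g-g_{true}$ into a single Lipschitz error function $e$, and you explicitly flag the squared-triangle-inequality cross term and the Jensen step, both of which the paper's own derivation passes over silently.
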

\begin{proof}
For any $X_i$, let us define its local neighborhood set $N_i$ as 
\begin{equation}
N_i=\{X: ||X-X_i||<||X-X_j|| \  0\leq j\leq m, j\neq i\ \ \& P(X)>0\}.
\end{equation}

Let us also define a set $S_i$ around $X_i$ as,
\begin{equation}
S_i=\{X: ||X-X_i||<d_{max} \ \& P(X)>0\}.
\end{equation}

As $m\arrow \infty$, given the definition of $d_{max}$, we must then have that $N_i\subseteq S_i$. Next, let $g$ be $L_1$-Lipschitz and $g_{true}$ be $L_2$-Lipschitz. As $m\arrow \infty$, we can assume a locally uniform distribution over the set of points in the neighborhood set $N_i$, yielding
\begin{align}
&\mathbb{E}_{X\sim N_i}[||g(f_T(X)) -g_{true}(f_T(X))||^2 ]  \\
&\leq \mathbb{E}_{X\sim N_i}[||g(f_T(X_i)) -g_{true}(f_T(X_i))||^2 ] + \mathbb{E}_{X\sim N_i}[||g(f_T(X)) -g(f_T(X_i))||^2 ] \\
& \quad \quad \quad \quad + \mathbb{E}_{X\sim N_i}[||g_{true}(f_T(X)) -g_{true}(f_T(X_i))||^2 ] \\ 
&\leq ||g(f_T(X_i)) -g_{true}(f_T(X_i))||^2 + L_1\mathbb{E}_{X\sim N_i}[||f_T(X)-f_T(X_i)||^2]  \\ 
&  + L_2\mathbb{E}_{X\sim N_i}[||f_T(X)-f_T(X_i)||^2]\\
&=||g(f_T(X_i)) -g_{true}(f_T(X_i))||^2 + (L_1^2+L_2^2)\mathbb{E}_{X\sim N_i}[||f_T(X)-f_T(X_i)||^2]\\
&\leq ||g(f_T(X_i)) -g_{true}(f_T(X_i))||^2 + (L_1^2+L_2^2)\mathbb{E}_{X\sim S_i}[||f_T(X)-f_T(X_i)||^2]
\end{align}

Next, let the intrinsic dimensionality at $f_T(X_i)$ be $d_i$. For any $d_i$ dimensional hypersphere $S$ centered at the origin and of radius $r$, we can directly estimate $\mathbb{E}_{X\sim S}[||X||^2]= r^2\frac{d_i}{d_i+2}$. Furthermore, let $f_T$ be $L_3$ Lipschitz, we can similarly embed the local neighborhood around $f_T(X_i)$ within a hypersphere of radius $L_3d_{max}$, as $||f_T(X)-f_T(X_i)||\leq  L_3 ||X-X_i||=L_3d_{max}$. As $m\arrow \infty$, we can use the local uniformity of the distribution and use these results. With this, we then have 
\begin{align}
\mathbb{E}_{X\sim P}&[||g(f_T(X)) -g_{true}(f_T(X))||^2 ] = \sum_{i=1}^{m} \frac{\mathbb{E}_{X\sim N_i}[||g(f_T(X)) -g_{true}(f_T(X))||^2 ]}{m} \\
&\leq \sum_{i=1}^{m} \frac{||g(f_T(X_i)) -g_{true}(f_T(X_i))||^2 + (L_1^2+L_2^2)\mathbb{E}_{X\sim S_i}[||X-X_i||^2]}{m} \\ 
&=\sum_{i=1}^{m} \frac{||g(f_T(X_i)) -g_{true}(f_T(X_i))||^2}{m}
+ \sum_{i=1}^{m} \frac{(L_1^2+L_2^2)L_3^2d_{max}^2\frac{d_i}{d_i+2}}{m} \\ 
&\leq \widehat{err}_{MSE}(S) +  {(L_1^2+L_2^2)d_{max}^2\frac{\E_i[d_i]}{\E_i[d_i]+2}} = \widehat{err}_{MSE}(S) + \frac{C d_{max}^2\Hbar(T)}{\Hbar(T)+2},
\end{align}
where $C=(L_1^2+L_2^2)L_3^2$. 
\end{proof}
\begin{corollary} \label{corr:sup_B_B}
\textbf{(Classifiers)}  We are given training data points and labels $S=\{(X_1,y_1),(X_2,y_2),..,(X_m,y_m)\}$.  Also, let $\widehat{err}(S)$ represent the 0-1 loss on the training data and $err(g(f_T(X)),y)$ represent the generalization error, where $f_T(X)$ represents the function at any layer $T$, and $g(f_T(X))$ yields the network output.  Let $g_{true}(.)$ be the ground truth label generating function. We consider architectures which yield continuous $f_T$ and $g$. Then, as $m\xrightarrow[]{} \infty$, for probabilistically Lipschitz continuous $g$ and $g_{true}$ and Lipschitz continuous $f_T$, we have that
\begin{equation}
    err(g(f_T(X)),y) \leq \widehat{err}(S) + \frac{C_{p} d_{max}\Hbar(T)}{\Hbar(T)+1},
\end{equation}
where $C_{p}$ depends on the probabilistically Lipschitz constants of $g$ and $g_{true}$, and the Lipschitz constant of $f_T$. 
\end{corollary}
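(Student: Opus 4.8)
The plan is to mirror the structure of the proof of Theorem \ref{thm:unsup_B}, replacing the squared-norm (MSE) machinery with its 0-1-loss counterpart. First I would define, exactly as before, the local neighborhood set $N_i = \{X : \|X - X_i\| < \|X - X_j\| \ \forall j \neq i, \ P(X) > 0\}$ and the ball $S_i = \{X : \|X - X_i\| < d_{max}, \ P(X) > 0\}$, so that $N_i \subseteq S_i$ as $m \arrow \infty$ and the $\{N_i\}$ tile the support. The generalization error is the expected 0-1 loss between the learned classifier $g \circ f_T$ and the ground-truth labeling, which in the notation of the statement reads $err(g(f_T(X)),y) = P_{X \sim P}\!\left(g(f_T(X)) \neq g_{true}(f_T(X))\right)$, and this decomposes over the neighborhoods as $\sum_i P_{X \sim N_i}(\cdot)/m$ under the locally uniform distribution approximation that becomes exact as $m \arrow \infty$.

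The step that genuinely departs from Theorem \ref{thm:unsup_B} is the replacement of the squared-norm triangle inequality by a union bound over disagreement events: for $X \in N_i$, if $g$ and $g_{true}$ disagree at $f_T(X)$, then at least one of three events must occur --- $g$ and $g_{true}$ disagree already at $f_T(X_i)$ (the training-error term), or $g$ disagrees with itself between $f_T(X)$ and $f_T(X_i)$, or $g_{true}$ does. Hence
\begin{align}
P_{X \sim N_i}&\left(g(f_T(X)) \neq g_{true}(f_T(X))\right) \leq \mathbf{1}\left(g(f_T(X_i)) \neq g_{true}(f_T(X_i))\right) \nonumber \\
&\quad + P_{X \sim N_i}\left(g(f_T(X)) \neq g(f_T(X_i))\right) + P_{X \sim N_i}\left(g_{true}(f_T(X)) \neq g_{true}(f_T(X_i))\right).
\end{align}
I would then invoke the probabilistically Lipschitz property of $g$ and $g_{true}$ to bound the last two terms by $C_g \,\E_{X \sim N_i}[\|f_T(X) - f_T(X_i)\|]$ and $C_{g_{true}} \,\E_{X \sim N_i}[\|f_T(X) - f_T(X_i)\|]$, and use the $L_3$-Lipschitz continuity of $f_T$ together with $N_i \subseteq S_i$ to embed the image of the neighborhood inside a hypersphere of radius $L_3 d_{max}$ centered at $f_T(X_i)$.

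The final ingredient is the first-moment computation on the hypersphere: for a $d_i$-dimensional ball of radius $r$ one has $\E_{X}[\|X\|] = r\,\tfrac{d_i}{d_i + 1}$, in contrast with the second moment $r^2\,\tfrac{d_i}{d_i+2}$ used in Theorem \ref{thm:unsup_B}, and this is exactly what produces the denominator $\Hbar(T)+1$ rather than $\Hbar(T)+2$ and the linear $d_{max}$ rather than $d_{max}^2$. Averaging over all $i$, identifying $\E_i[d_i]$ with $\Hbar(T)$ as $\epsilon \arrow 0^{+}$, and using the concavity of $d \mapsto d/(d+1)$ through Jensen's inequality (so that $\E_i[d_i/(d_i+1)] \leq \Hbar(T)/(\Hbar(T)+1)$), I obtain
\begin{equation}
err(g(f_T(X)),y) \leq \widehat{err}(S) + \frac{C_p\, d_{max}\, \Hbar(T)}{\Hbar(T)+1},
\end{equation}
with $C_p = (C_g + C_{g_{true}}) L_3$. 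The main obstacle I anticipate is the careful justification of the probabilistically Lipschitz bound in its neighborhood-averaged form: the definition controls $P(f(X) \neq f(X'))$ for a pair of points, and one must argue that integrating against the locally uniform law of $X \sim N_i$ (with $X_i$ fixed) legitimately yields the linear-in-expected-distance bound used above. This is precisely the point where the probabilistic Lipschitz constants enter and where the $m \arrow \infty$ local-uniformity assumption is doing the essential work.
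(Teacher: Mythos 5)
Your proposal follows essentially the same route as the paper's own proof: the identical $N_i$/$S_i$ construction, the same three-term decomposition of the per-neighborhood 0-1 loss (which the paper writes as a triangle inequality and you correctly ground as a union bound over the disagreement events), the same probabilistically Lipschitz bounds with $f_T$'s Lipschitz constant embedding the image in a ball of radius $L_3 d_{max}$, the same first-moment formula $r\,d_i/(d_i+1)$ on the $d_i$-dimensional ball, and the same averaging over $i$ (your explicit appeal to Jensen via concavity of $d \mapsto d/(d+1)$ is precisely the step the paper leaves implicit when passing from $\E_i[d_i/(d_i+1)]$ to $\Hbar(T)/(\Hbar(T)+1)$). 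Your closing caveat about justifying the neighborhood-averaged form of the pairwise probabilistic Lipschitz condition is well taken, but the paper's proof uses exactly that averaged form without further argument, so your proposal is, if anything, slightly more careful on the same path.
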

\begin{proof}
To prove this corollary, we proceed in the same manner as in the proof of Theorem 1, defining $N_i$, $S_i$ and all other relevant quantities in the first part of the proof. Let us denote the 0-1 loss via the term $L_{01}$, and the probabilistic Lipschitz constants for $g$ and $g_{true}$ as $Lp_1$ and $Lp_2$ respectively. Next, we compute the expected 0-1 loss on the neighborhood set $N_i$ as follows. 

\begin{align}
&\mathbb{E}_{X\sim N_i}[L_{01} \left( g(f_T(X)) ,g_{true}(f_T(X)) \right) ]  \\
&\leq \mathbb{E}_{X\sim N_i}[L_{01} \left( g(f_T(X_i)) ,g_{true}(f_T(X_i)) \right)  ] + \mathbb{E}_{X\sim N_i}[L_{01} \left( g(f_T(X)) ,g(f_T(X_i)) \right) ] \\
& \quad \quad \quad \quad + \mathbb{E}_{X\sim N_i}[L_{01} \left( g_{true}(f_T(X)) ,g_{true}(f_T(X_i)) \right) ] \\ 
&\leq L_{01} \left( g(f_T(X_i)) ,g_{true}(f_T(X_i)) \right) + Lp_1\mathbb{E}_{X\sim N_i}[||f_T(X)-f_T(X_i)||] \\ 
&+ Lp_2\mathbb{E}_{X\sim N_i}[||f_T(X)-f_T(X_i)||]\\
&=L_{01} \left( g(f_T(X_i)) ,g_{true}(f_T(X_i)) \right) + (Lp_1+Lp_2)\mathbb{E}_{X\sim N_i}[||f_T(X)-f_T(X_i)||]\\
&\leq L_{01} \left( g(f_T(X_i)) ,g_{true}(f_T(X_i)) \right) + (Lp_1+Lp_2)\mathbb{E}_{X\sim S_i}[||f_T(X)-f_T(X_i)||]
\end{align}
Next, let the intrinsic dimensionality at $f_T(X_i)$ be $d_i$. For any $d_i$ dimensional hypersphere $S$ centered at the origin and of radius $r$, one can directly estimate $\mathbb{E}_{X\sim S}[||X||]= r\frac{d_i}{d_i+1}$, where the distribution is uniform in $S$. Furthermore, let $f_T$ be $Lp_3$ Lipschitz, we can similarly embed the local neighborhood around $f_T(X_i)$ within a hypersphere of radius $L_3d_{max}$, as $||f_T(X)-f_T(X_i)||\leq  L_3 ||X-X_i||=L_3d_{max}$. As $m\arrow \infty$, we can use the local uniformity of the distribution and use these results. With this, we then have
\begin{align}
\mathbb{E}_{X\sim P}&[L_{01} \left( g(f_T(X)) ,g_{true}(f_T(X)) \right) ] = \sum_{i=1}^{m} \frac{\mathbb{E}_{X\sim N_i}[L_{01} \left( g(f_T(X)) ,g_{true}(f_T(X)) \right) ]}{m} \\
&\leq \sum_{i=1}^{m} \frac{L_{01} \left( g(f_T(X)) ,g_{true}(f_T(X)) \right)  + (Lp_1+Lp_2)\mathbb{E}_{X\sim S_i}[||f_T(X)-f_T(X_i)||]}{m} \\ 
&=\sum_{i=1}^{m} \frac{L_{01} \left( g(f_T(X_i)) ,g_{true}(f_T(X_i)) \right)}{m}
+ \sum_{i=1}^{m} \frac{(Lp_1+Lp_2)L_3d_{max}\frac{d_i}{d_i+1}}{m} \\ 
&\leq \widehat{err}(S) +  {(Lp_1+Lp_2)L_3d_{max}\frac{\E_i[d_i]}{\E_i[d_i]+1}} = \widehat{err}(S) + \frac{C_p d_{max}\Hbar(T)}{\Hbar(T)+1},
\end{align}
where $C_p=(Lp_1+Lp_2)L_3$. The result follows. 
\end{proof}

\end{document}